\documentclass[11pt]{article}

\usepackage[preprint]{acl}

\usepackage{times}
\usepackage{latexsym}

\usepackage[T1]{fontenc}

\usepackage[utf8]{inputenc}

\usepackage{microtype}

\usepackage{inconsolata}

\usepackage{graphicx}

\usepackage{amsmath}
\usepackage{algorithm}
\usepackage{algpseudocode}
\usepackage{booktabs}
\usepackage{multirow}
\usepackage{makecell}
\usepackage{colortbl}
\usepackage{array}
\usepackage{subcaption}  
\usepackage{mwe}
\usepackage{amssymb}
\usepackage{amsthm}
\newtheorem{proposition}{Proposition}

\usepackage{xcolor}
\usepackage{pifont}
\usepackage{inconsolata}
\usepackage[T1]{fontenc}
\usepackage{tcolorbox}
\tcbuselibrary{skins}
\usepackage{fvextra}
\usepackage{alltt}

\definecolor{boxtitle}{HTML}{5C6B7A}   
\definecolor{boxback}{HTML}{EFF1F4}    
\definecolor{guardtext}{HTML}{0AA06E}  
\definecolor{passc}{HTML}{0AA06E}      
\definecolor{failc}{HTML}{C0392B}      
\newcommand{\gline}[1]{\textcolor{guardtext}{\bfseries #1}}

\newtcolorbox{exbox}[1]{
  enhanced,
  colback=boxback, colframe=boxtitle,
  coltitle=white, fonttitle=\bfseries\sffamily,
  colbacktitle=boxtitle,
  title={#1},
  boxrule=0.6pt, arc=2pt,
  left=6pt, right=6pt, top=5pt, bottom=5pt,
}

\title{Rethinking the Generation Order of Block Diffusion Language Models}

\author{
Kai Syun Hou \qquad James Kwok \\
Department of Computer Science and Engineering \\
The Hong Kong University of Science and Technology \\
}

\begin{document}
\maketitle
\begin{abstract}
Diffusion language models enable flexible arbitrary-order generation, 
but existing sampling methods are mostly designed for early masked diffusion models (MDMs).
In this work, we study sampling for recent block diffusion language models (BDLMs).
We show empirically and analytically that these models are naturally more aligned with left-to-right decoding than MDMs.
Based on this observation, we propose \underline{P}arallel \underline{A}uto\underline{r}egressive \underline{D}ecoding (PARD), a simple training-free sampling method that preserves left-to-right unmasking structure while allowing parallel token commitment.
Extensive experiments show that PARD consistently outperforms existing parallel samplers in generation quality, 
while achieving substantial speedups over pure AR decoding with only a small quality gap.
\end{abstract}

\section{Introduction}
\label{sec:intro}
Recently,
diffusion language models (DLMs) have 
emerged as a promising alternative to autoregressive language models (ARMs) for text generation \citep{dlmsurvey}.
In particular, masked diffusion models (MDMs)
have attracted growing attention for their simple denoising formulation and strong empirical performance~\citep{simple,shi2024simplified}.
MDMs iteratively reconstruct a sequence from a partially masked state, enabling tokens to be generated in arbitrary orders and in parallel.

The arbitrary-order generation flexibility of MDMs has motivated a growing line of sampling methods that seek to improve decoding quality and efficiency. 
These methods typically decide which masked positions should be unmasked at each denoising step \citep{dream,margin}.
More recent methods further accelerate decoding through parallel sampling, 
committing multiple positions per step based on criteria designed to balance generation speed and reliability ~\citep{fast-dllm,klass}.

Early MDMs, such as LLaDA~\citep{llada} and Dream~\citep{dream}, apply masked diffusion over the entire sequence. 
In contrast, 
Block diffusion language models (BDLMs) 
are trained to factorize generation autoregressively across blocks, 
while applying masked diffusion within each block conditioned on the clean preceding blocks \citep{blockdiffusion}.
This setting is increasingly important, as many recent high-performing DLMs adopt block diffusion, often by continuing training from pretrained autoregressive LLMs with a block-diffusion objective~\citep{fastdllmv2,sdar,llada20,llada21}. 
However, existing DLM sampling methods have mostly
been developed and evaluated on early MDMs, leaving their behavior on these recent BDLMs underexplored.

To investigate this gap, we conduct a comparison between two 8B models: SDAR \citep{sdar},
a recent BDLM, and LLaDA \citep{llada},
a representative MDM. 
We observe that both models exhibit an autoregressive-like decoding order.
Motivated by this, we compare a simple AR sampler, which always unmasks the leftmost masked token, against common arbitrary-order samplers. 
Surprisingly, this simple strategy achieves higher accuracy on SDAR, 
while it performs worse than the arbitrary-order samplers on LLaDA.

Besides the AR-pretrained initialization, we further explain this behavior by comparing the training-time input contexts of MDM and block diffusion.
We show that MDM training almost never exposes the model to the left-to-right context used by AR decoding.
In contrast, block diffusion training observes such contexts with larger and non-negligible probability,
and it biases the overall input context toward the left-to-right context.
This training-time bias makes BDLMs naturally more aligned with AR decoding.

Finally, we propose \emph{Parallel Autoregressive Decoding (PARD)}, a simple training-free sampling strategy that adapts existing arbitrary-order parallel samplers to left-to-right decoding. 
PARD unmasks the longest leftmost prefix of positions accepted by a given criterion, preserving AR structure while retaining parallelism.
Experiments on three recent BDLMs show that PARD consistently outperforms existing parallel samplers, 
achieving stronger generation quality while preserving substantial decoding speedups.

\section{Related Work}
\label{sec:related}

\subsection{Diffusion Language Models}
Diffusion models have emerged as successful generative models in continuous domains \citep{diffusion2015,ddpm,diffusion-survey}.
This success has motivated the development of discrete diffusion language models (DLMs) for text generation \citep{continuoustime,d3pm,lou2023discrete,ou2025your}.
Masked Diffusion Models (MDMs) are a particularly representative class among them \citep{simple,shi2024simplified}.
In the forward process of MDMs, clean tokens are gradually replaced with an absorbing state token \texttt{[MASK]}.
In the reverse process, the model learn to iteratively reconstruct the sequence by unmasking tokens until the original clean sequence is recovered.
Recent work has scaled MDMs to billions of parameters~\citep{llada,llada15},
showing that DLMs can achieve generation quality competitive with ARMs while offering the potential for faster inference~\citep{seeddiffusion,mercury}.

MDM treats the whole sequence as a single diffusion state,
so each denoising step processes all token positions.
Block diffusion \citep{blockdiffusion} instead factorizes generation autoregressively over consecutive blocks, while performing diffusion within each block.
Importantly, block diffusion language models (BDLMs) are not 
merely MDMs equipped with a semi-autoregressive (semi-AR) decoding strategy.
Rather, the model is trained with a masked diffusion objective within each block, conditioned on the clean prefix before the block.
This design enables exact KV cache reuse of finalized blocks, 
providing a substantial inference-speed advantage over MDMs,
which forward the full sequence at each step.
Moreover, BDLMs support arbitrary-length generation.
These practical advantages have motivated a wave of recent foundation BDLMs \citep{fastdllmv2, sdar, llada20, llada21, nbdiff, fastdvlm}.
Instead of training from scratch, 
these models are often adapted from strong ARMs through continued pretraining or fine-tuning with the block diffusion objective. 
Compared with MDMs of similar parameter scale, these BDLMs achieve substantially stronger generation quality and inference efficiency.

\begin{figure*}[t]
\centering
\begin{minipage}[t]{0.51\textwidth}
    \centering
    \includegraphics[width=\linewidth]{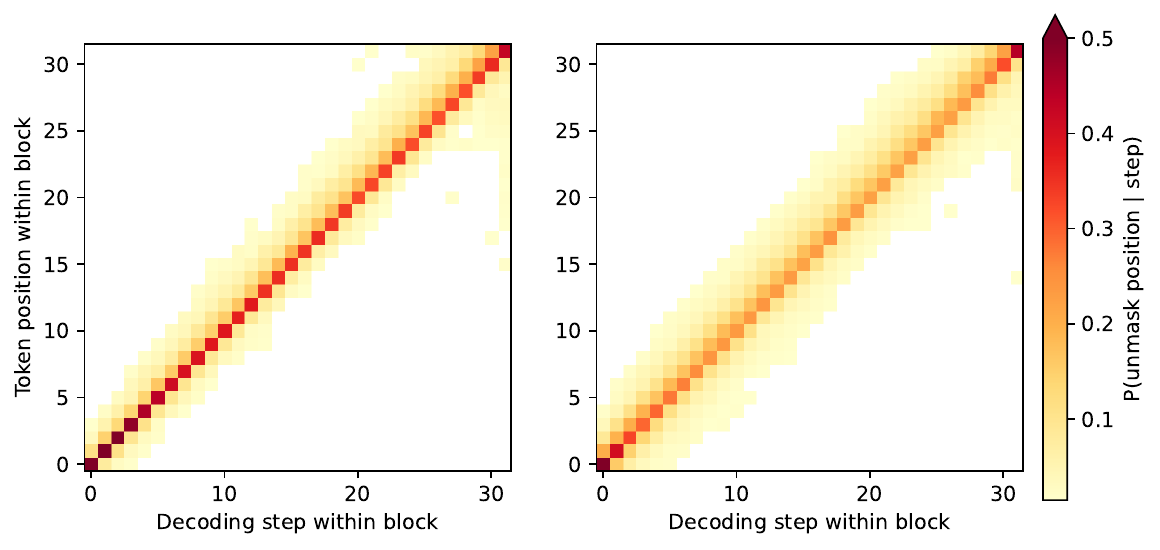}
\captionof{figure}{
Distributions
of unmasking positions at various denoising steps for SDAR (left) and LLaDA (right) on {\it HumanEval}.
    }
    \label{fig:heatmap-a}
\end{minipage}\hfill
\begin{minipage}[t]{0.47\textwidth}
    \centering
    \includegraphics[width=\linewidth]{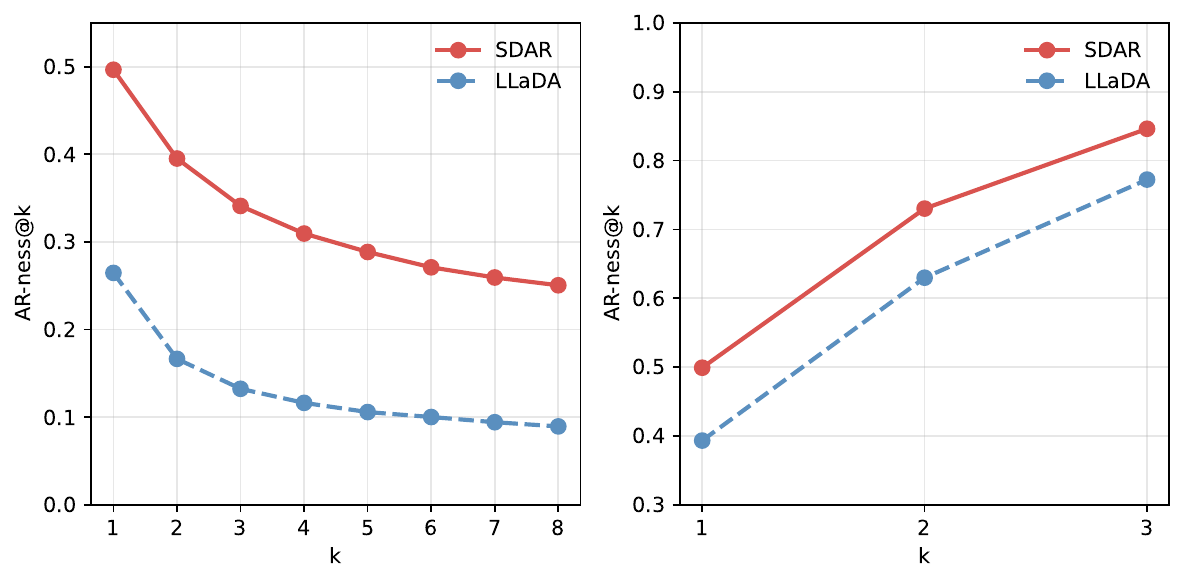}
    \captionof{figure}{
        Local (left) and global (right) AR-ness@$k$ of decoding on {\it HumanEval} under Confidence sampling.
    }
    \label{fig:ar-ness-curves}
\end{minipage}

\vspace{-12pt}
\end{figure*}

\subsection{Sampling Methods for MDMs}
Early MDMs use uniform sampling, which selects the next unmasked position uniformly from the remaining masked positions \citep{d3pm}. 
While aligned with the MDM training distribution, this strategy is often suboptimal for generation quality,
motivating heuristic samplers based on model-dependent scoring rules.
{\small\sf Confidence} selects the position with the highest predicted probability~\citep{maskgit}.
{\small\sf Margin} selects the position with the largest gap between the top-1 and top-2 predicted probabilities~\citep{margin}.
{\small\sf Entropy} selects the position with the lowest predictive entropy~\citep{dream}.
Uncode~\citep{uncode} augments confidence with a position-aware prior and an informativeness term that down-weights generic high-frequency tokens.

Recent works have also developed more advanced parallel sampling methods that
trade off generation quality and inference speed.
Representative approaches include confidence-thresholded unmasking \citep{fast-dllm, dimple}, KL-guided stability-based selection \citep{klass}, entropy-bounded subset selection \citep{ebsampler}, and tree-structured hierarchical decoding \citep{hierarchy}.

\section{Understanding the AR Bias of BDLMs}
\label{sec:method}

\subsection{Empirical Observations}
\label{sec:obs}
We first examine the decoding orders induced by the existing arbitrary-order sampler based on {\small\sf Confidence}.
We compare one recent BDLM, SDAR 8B \citep{sdar}, with the representative MDM, LLaDA 8B \citep{llada}.
Both models are evaluated on {\it HumanEval} using {\small\sf Confidence}.
For LLaDA, we employ the semi-AR decoding strategy.
We use $32$ denoising steps with block size $32$ for both models, so exactly one token is unmasked at each step.
For each generated block, we record the within-block position of the token unmasked at each step.

Figure~\ref{fig:heatmap-a}
shows the 
distribution of 
unmasking positions at each denoising step on {\it HumanEval}.
A strictly autoregressive order would place all probability mass on the diagonal.
As can be seen, both models exhibit a clear concentration around the diagonal, indicating that {\small\sf Confidence} already induces a near-left-to-right decoding order.
Within $\pm 2$ positions of the diagonal, the average probability mass is $0.75$ for SDAR and $0.70$ for LLaDA.
SDAR shows a stronger diagonal concentration, with an average on-diagonal mass of $0.39$ compared to $0.27$ for LLaDA (whose diagonal pattern is fainter).

To further quantify 
the decoding order's
closeness 
to AR, 
we compute the two AR-ness@$k$ metrics\footnote{Detailed definitions of the metrics are in Appendix \ref{sec:appendix_arness}.}
proposed in \cite{diffucoder}:
(i)
local AR-ness@$k$,
which measures the fraction of steps that extend a $k$-long consecutive run; and (ii)
global AR-ness@$k$,
which measures the fraction of steps where the selected position is among the $k$ earliest remaining masked positions.
Since blocks are generated autoregressively, 
we compute the 
average 
AR-ness metrics 
over all blocks.
As shown in Figure~\ref{fig:ar-ness-curves},
SDAR exhibits consistently higher local and global AR-ness 
across different $k$'s,
again confirming that 
BDLM is more AR-like than LLaDA
on decoding.

\begin{figure}[t]
\centering
\includegraphics[width=\linewidth]{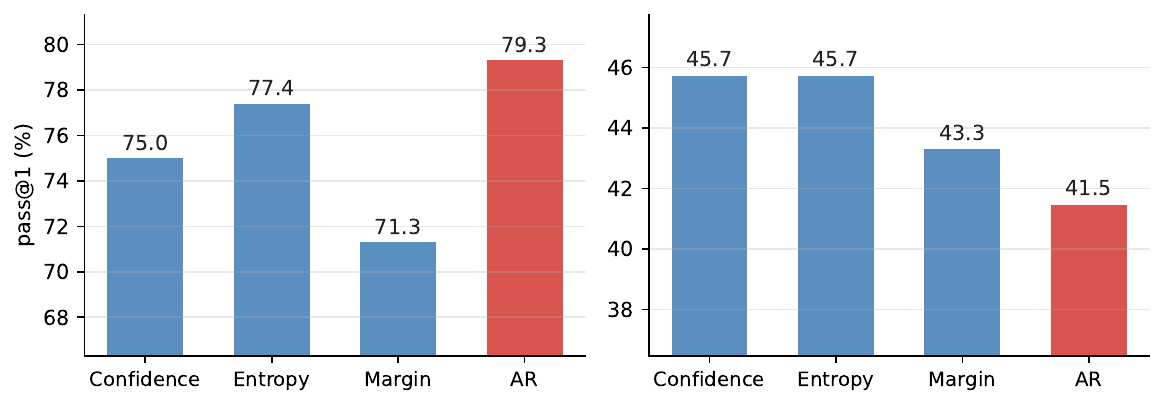}
\caption{{\it HumanEval} Base pass@1 for various samplers on SDAR (left) and LLaDA (right).}
\label{fig:ar-vs-conf-accuracy}
\vspace{-15pt}
\end{figure}

Recent works \citep{diffucoder,nap} have also observed that decoding in MDMs often exhibits strong
left-to-right structure despite their arbitrary-order flexibility. 
\citet{flexibilitytrap} further show that enforcing AR order during RL rollouts can improve the reasoning performance of MDMs.
However, it remains unclear whether explicit AR sampling at inference time is preferable,
especially for BDLMs.
We therefore compare three representative arbitrary-order samplers ({\small\sf Confidence}, {\small\sf Margin}, and {\small\sf Entropy}) with a pure AR sampler 
(which always selects the leftmost masked position at each step).
Figure~\ref{fig:ar-vs-conf-accuracy}
shows
the pass@1 values on {\it HumanEval} for SDAR and LLaDA with different samplers.
As can be seen, there is substantial difference between SDAR and LLaDA.
On SDAR, pure AR performs best, improving 
the strongest arbitrary-order sampler ({\small\sf Entropy}) by $1.9$ points.
In contrast, on LLaDA, pure AR performs worst,
underperforming {\small\sf Confidence} and {\small\sf Entropy} by $4.2$ points.
These results suggest that an explicit AR order is especially favorable for BDLMs, 
but not necessary for MDMs
trained from scratch.
In the following, 
we explore
why BDLMs are better aligned with AR sampling.

We additionally analyze the AR bias of Dream 7B \citep{dream} in Appendix \ref{sec:appendix_dream_ar}.

\subsection{Preliminaries}
\paragraph{Notation.}
Let $V$ be the vocabulary, $\texttt{[MASK]}$ be the absorbing mask token, and
$\bar{V} = V \cup \{\texttt{[MASK]}\}$.
A clean sequence $x_0 \in V^L$ is drawn from the data distribution $p_{\mathrm{data}}$.
The sequence is partitioned into $B$ contiguous blocks, each of size $L'$.
We denote each block as $x^b_0 \in V^{L'}$ ($b \in \{1,...,B\}$),
and write
$x_0^{<b}=(x_0^1,\ldots,x_0^{b-1})$ for the prefix before block $b$.
We denote $p_\theta(x_0^\ell \mid x_t)$ as the denoising model's predictive distribution at position $\ell$,
conditioned on the noised input sequence $x_t$.

\paragraph{MDM.}
During training, the forward process first samples a noise level $t\sim U[0,1]$,
then masks each position $\ell$ independently 
to obtain the noised sequence $x_t \in \bar{V}^{L}$ according to
\begin{equation}
\label{eq:forward}
q_t(x_t^\ell \mid x_0^\ell) =
\mathrm{Cat}\!\left(
x_t^\ell ;
\alpha_t \mathbf{e}_{x_0^\ell}
+
(1-\alpha_t)\mathbf{e}_{\texttt{[MASK]}}
\right),
\end{equation}
where $\alpha_t$ is a non-increasing noise schedule, and
$\mathbf{e}_v$ is the one-hot vector for token $v \in \bar{V}$.
Let $\mathcal{M}(x_t)=\{j:x_t^j=\texttt{[MASK]}\}$ be the masked positions in $x_t$.
With the linear schedule $\alpha_t=1-t$, 
MDM is trained to predict the original tokens at masked positions
\citep{simple}:
\[
\mathcal{L}_{\mathrm{MDM}}
=
-
\mathop{\mathbb{E}}\limits_{\substack{
t \sim U[0,1]\\
x_t \sim q_t(\cdot \mid x_0)
}}
\frac{1}{t}
\sum_{j \in \mathcal{M}(x_t)}
\log p_\theta(x_0^j \mid x_t).
\]
This is a NELBO upper bound on the negative log-likelihood
$-
\log p_\theta(x_0)$.

\paragraph{Block Diffusion.}
Block diffusion factorizes the sequence likelihood autoregressively over blocks:
$p_\theta(x_0) = \prod_{b=1}^B p_\theta(x_0^b \mid x_0^{<b})$ \citep{blockdiffusion}.
Each conditional distribution $p_\theta(x_0^b \mid x_0^{<b})$ is modeled by a masked diffusion process
within block $b$, conditioned on the clean prefix $x_0^{<b}$.
For each block $b$, the forward noising process first samples a noise level $t\sim U[0,1]$, then 
obtain $x_t^b$ 
by position-independent masking as \eqref{eq:forward}.
With the linear schedule $\alpha_t=1-t$,
the block diffusion training objective can be written as
\begin{eqnarray*}
\mathcal{L}_{\mathrm{Block}}
&= &
-
\sum_{b=1}^B
\mathbb{E}_{t
\sim U[0,1],
x_t^b
\sim q_t(\cdot \mid x_0^b)
}
\\
&&\Bigg[
\frac{1}{t}
\sum_{j \in \mathcal{M}(x_t^b)}
\log p_\theta(x_0^{b,j} \mid x_0^{<b}, x_t^b)
\Bigg],
\end{eqnarray*}
where $x_0^{b,j}$ is the $j$-th token in block $b$. 

\subsection{Why Are BDLMs More Aligned With AR?}
Recent BDLMs are obtained by fine-tuning or continual pretraining from AR-pretrained LLMs \citep{fastdllmv2,sdar}.
Compared with the AR-pretraining stage, which optimizes next-token prediction over trillions of tokens, block diffusion adaptation typically uses a much smaller token budget.
For example, Qwen2.5 7B is pretrained on 18T tokens~\citep{qwen2.5}, while Fast-dLLM v2 adapts it to a BDLM with only 1B tokens of fine-tuning.
This large training-budget gap suggests that block diffusion adaptation does not fully overwrite the left-to-right conditional behavior learned during AR pretraining.

Beyond initialization, the block diffusion objective also gives rise to input contexts better matched to AR decoding.
During block diffusion training, when the model predicts tokens in block $b$, it conditions on the clean prefix from all previous blocks and only 
the current block
is partially masked.
Future blocks are not included in the input.
In contrast, MDM training masks positions across the whole sequence, which can leave the model with a discontinuous left context while also exposing tokens to the right.

To quantify this difference,
consider a position $\ell$.
The mask pattern 
$\mathbf{m}^{\mathrm{AR}}_\ell \in \{0,1\}^N$
seen 
by an MDM 
(with $N=L$) or
block diffusion model
(with $N=bL'$ and position $\ell$
contained in block $b$) is purely AR when
\[
(\mathbf{m}^{\mathrm{AR}}_\ell)_j =
\begin{cases}
0, & j < \ell,\\
1, & \ell \le j \le N,
\end{cases}
\]
i.e., the left prefix has already been generated, 
while the future positions are not visible.
The following Proposition compares how often MDM and block diffusion
model see this AR mask pattern during training.
The proof is in Appendix \ref{sec:prop1_proof}.

\begin{proposition}[Probability of seeing AR mask pattern]
\label{prop:ar-mass}
Assume the linear noise schedule $\alpha_t = 1-t$ with $t\sim U[0,1]$.
Consider any position $\ell$.
We have
\[
\Pr \! \left(
\mathbf{m}^{\mathrm{AR}
}_\ell
\text{ observed} 
\right)
=
\frac{1}{(L+1)\binom{L}{\ell-1}}
\]
for MDM, and
\[
\Pr \! \left(
\mathbf{m}^{\mathrm{AR}}_{\ell} \text{ observed}\right) 
=
\frac{1}{(L'+1)\binom{L'}{k}},
\]
for block diffusion,
where 
$k=\ell-1-(b-1)L'$ is the 
left-context length
in this block.
\end{proposition}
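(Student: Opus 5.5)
The plan is to condition on the noise level $t$ and then integrate over $t\sim U[0,1]$, reducing both cases to a Beta integral. The AR mask pattern $\mathbf{m}^{\mathrm{AR}}_\ell$ is a single fixed element of $\{0,1\}^N$: position $j$ is unmasked for $j<\ell$ and masked for $j\ge \ell$. Under the forward process \eqref{eq:forward} with $\alpha_t=1-t$, each position is masked independently with probability $t$. So, conditioned on $t$, the probability of any fixed pattern with $u$ unmasked and $m$ masked positions is $(1-t)^u t^m$.

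For the MDM case the noise acts on the whole sequence, so $N=L$, $u=\ell-1$ and $m=L-\ell+1$. Integrating over $t$ gives
\[
\int_0^1 (1-t)^{\ell-1} t^{L-\ell+1}\,dt = B(\ell,\,L-\ell+2) = \frac{(\ell-1)!\,(L-\ell+1)!}{(L+1)!}.
\]
Writing $(L+1)! = (L+1)\,L!$ turns this into $\frac{1}{(L+1)\binom{L}{\ell-1}}$, as claimed.

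For block diffusion, the key observation is that the first $(b-1)L'$ positions, forming the prefix $x_0^{<b}$, are always clean. Future blocks are absent from the input, which is exactly why the pattern is defined on $N=bL'$ positions with no entries beyond block $b$. The event therefore depends only on the masking within block $b$, which is drawn with a single $t\sim U[0,1]$. Inside the block, the pattern requires the first $k=\ell-1-(b-1)L'$ positions to be unmasked and the remaining $L'-k$ positions to be masked. The same Beta integral, with $L$ replaced by $L'$ and $\ell-1$ by $k$, gives $\frac{k!\,(L'-k)!}{(L'+1)!} = \frac{1}{(L'+1)\binom{L'}{k}}$.

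The main obstacle is conceptual rather than computational: I need to state precisely what "observed" means for the block model. I would argue that the training input for predicting block $b$ is $(x_0^{<b}, x_t^b)$. The prefix part of the pattern then holds with probability one, and no constraint is imposed on positions beyond $bL'$. The rest is the standard identity $\int_0^1 t^a(1-t)^c\,dt = \frac{a!\,c!}{(a+c+1)!}$.
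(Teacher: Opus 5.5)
Your proposal is correct and follows essentially the same route as the paper's proof. Both condition on $t$ to get $(1-t)^{\ell-1}t^{L-\ell+1}$ for MDM, and $(1-t)^{k}t^{L'-k}$ for block diffusion with the prefix deterministically clean; both then integrate over $t\sim U[0,1]$ and evaluate the result as a Beta integral, $\frac{a!\,c!}{(a+c+1)!}$.
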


For example, for MDM, when $L=1024$, the probability is approximately $10^{-310}$ at $\ell=512$, and 
approximately $10^{-6}$
even at the edge ($\ell=1024$).
Hence, AR contexts are almost absent during  MDM training.
In contrast, for block diffusion, 
with the same $\ell$, this probability is always larger.
For example, with $L'=32$,
it is about $10^{-11}$
at $k=15$;
about $0.03$ at 
$k=0$, and about $0.001$ at the second ($k=1$) and last positions ($k=31$).
Hence, during training, BDLMs receive more frequent supervision under contexts that match those queried by AR decoding.



Besides studying the probability that the AR mask pattern is exactly observed, we
can also measure how 
the 
mask
$\mathbf{M}$ 
(with $\mathbf{M}_j=1$ if position $j$ is masked, and $0$ otherwise)
used at position $\ell$ 
aligns with 
the AR pattern.
For MDM, 
$\mathbf{M}^{\text{MDM}}\in\{0,1\}^{L}$, and
the AR pattern alignment 
can be defined
as
\begin{align*}
\rho_\ell(\mathbf{M}^{\mathrm{MDM}})
&= \frac{1}{L-1}
\Bigg[
\sum_{j=1}^{\ell-1}(1-\mathbf{M}^{\mathrm{MDM}}_j) \\
&\qquad\qquad\quad
+ \sum_{j=\ell+1}^{L}\mathbf{M}^{\mathrm{MDM}}_j
\Bigg];
\end{align*}
whereas
for block diffusion
(with position $\ell$  contained in block $b$),
$\mathbf{M}^{\text{BDLM}}\in\{0,1\}^{bL'}$ and
the AR pattern alignment 
is
\begin{align*}
\rho_\ell(\mathbf{M}^{\mathrm{BDLM}})
&= \frac{1}{bL'-1}
\Bigg[
\sum_{j=1}^{\ell-1}(1-\mathbf{M}^{\mathrm{BDLM}}_j) \\
&\qquad\qquad\qquad
+ \sum_{j=\ell+1}^{bL'}\mathbf{M}^{\mathrm{BDLM}}_j
\Bigg].
\end{align*}
$\rho_\ell=1$ when the exact AR mask pattern is observed.
The following Proposition shows that 
block diffusion has a higher expected AR pattern alignment than MDM.
The proof is in Appendix~\ref{sec:prop2_proof}.

\begin{proposition}[Expected AR pattern alignment]
\label{prop:alignment}
Assume the linear noise schedule $\alpha_t=1-t$ with $t\sim U[0,1]$.
For MDM, the expected AR pattern alignment is
$\mathbb{E}
[\rho_{\ell}(\mathbf{M}^{\mathrm{MDM}})] = 1/2$.
For block diffusion, 
\begin{align*}
\mathbb{E}
[\rho_{\ell}(\mathbf{M}^{\mathrm{BDLM}})] &= \frac{1}{2} + \frac{(b-1)L'}{2(bL'-1)} \\
        & \ge
        \mathbb{E}
        [\rho_{\ell}(\mathbf{M}^{\mathrm{MDM}})],  
\end{align*}
where expectations are over the mask patterns.
\end{proposition}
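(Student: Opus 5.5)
The plan is to use linearity of expectation together with the tower property over the noise level $t$. In both models, conditional on $t$, each masking indicator is Bernoulli($t$) with $\Pr(\mathbf{M}_j=1\mid t)=t$. Averaging over $t\sim U[0,1]$ gives $\mathbb{E}[\mathbf{M}_j]=1/2$ and $\mathbb{E}[1-\mathbf{M}_j]=1/2$ for every position that is noised. Independence across positions is never needed, only the marginals, so the whole proof reduces to counting how many of the summed positions are noised and how many are deterministically clean.

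\textbf{MDM.} Every position $j\in\{1,\dots,L\}$ is masked with marginal probability $1/2$. The bracket in $\rho_\ell(\mathbf{M}^{\mathrm{MDM}})$ contains $\ell-1$ terms of the form $1-\mathbf{M}_j$ and $L-\ell$ terms of the form $\mathbf{M}_j$, so $L-1$ terms in total. Each term has expectation $1/2$. Dividing by $L-1$ therefore gives $\mathbb{E}[\rho_\ell]=1/2$, independently of $\ell$.

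\textbf{Block diffusion.} The key structural fact, read off from $\mathcal{L}_{\mathrm{Block}}$, is that when predicting block $b$ the prefix $x_0^{<b}$ is clean. Hence $\mathbf{M}^{\mathrm{BDLM}}_j=0$ deterministically for $j\le (b-1)L'$, while the positions $j\in\{(b-1)L'+1,\dots,bL'\}$ are masked independently with a single $t\sim U[0,1]$.

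I would split the first sum at $(b-1)L'$. The $(b-1)L'$ prefix terms each contribute exactly $1$. The remaining $\ell-1-(b-1)L'$ in-block left terms, together with the $bL'-\ell$ in-block right terms, each contribute $1/2$ in expectation. The number of in-block terms is $bL'-1-(b-1)L'=L'-1$. The expected bracket is therefore $(b-1)L'+(L'-1)/2$. Rewriting this as $\tfrac12(bL'-1)+\tfrac12(b-1)L'$ and dividing by $bL'-1$ yields the claimed $\tfrac12+\frac{(b-1)L'}{2(bL'-1)}$.

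The inequality is immediate, since the extra term is nonnegative for $b\ge1$, with equality iff $b=1$. The only step needing care is correctly identifying that the prefix positions lie in the index range of $\rho_\ell$ but are never masked under the training objective. Beyond that there is just the degenerate case $bL'=1$ (or $L=1$), where the normalizer vanishes and which I would exclude by assuming $L'\ge2$.
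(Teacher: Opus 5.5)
Your proposal is correct and follows essentially the same route as the paper. Both arguments get the marginal $\mathbb{E}[\mathbf{M}_j]=\mathbb{E}[1-\mathbf{M}_j]=1/2$ by averaging over $t$, use linearity to count $L-1$ such terms for MDM, and for block diffusion split the left sum at $(b-1)L'$, with the $(b-1)L'$ clean-prefix terms contributing $1$ each and the $L'-1$ in-block terms contributing $1/2$ each. Your extra remarks are correct points the paper leaves implicit: independence is never used, equality holds iff $b=1$, and the normalizer vanishes only when $b=L'=1$.
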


The alignment increases with the block position $b$.
For the typical block size $L'{=}32$, the expected alignment is $0.5$ in
the first block, $0.75$ in the second block, and $0.95$ in the tenth block.

\section{Parallel Autoregressive Decoding}
\label{sec:ar-parallel}

The above analysis suggests that recent BDLMs are better matched to
left-to-right decoding contexts.
This motivates a sampler that preserves the AR prefix structure
during generation.
While a pure AR sampler obviously is the perfect match,
it is inherently
sequential and therefore sacrifices the parallelism advantage of DLMs.

To alleviate this problem,
we introduce Parallel Autoregressive Decoding (PARD).
PARD retains the AR bias while still allowing multiple tokens to be
unmasked per step, and
can be applied to existing arbitrary-order samplers.

Consider generation within block $b$, conditioned on the clean prefix $x_0^{<b}$.
At sampling step $s$, let $x_s^b$ be the current partially masked block, and
denote by $\mathcal{M}_s^b$ the set of masked positions in this block.
For each masked position $j \in \mathcal{M}_s^b$, let
\begin{equation}
    \pi_j(v)
    =
    p_\theta(x_0^{b,j}=v \mid x_0^{<b}, x_s^b).
\end{equation}
Let $p_j^{(1)}$ and $p_j^{(2)}$ be the largest and second-largest
probabilities under $\pi_j$, respectively.
Recall that the 
parallel samplers 
({\small\sf Confidence}, {\small\sf Margin}, and {\small\sf Entropy})
select 
the sets 
of 
unmasking 
positions as: 
\begin{eqnarray*}
\textsf{\small Confidence:}  &
    U_s^{\mathrm{C}} = \{j \in \mathcal{M}_s^b : c_j > \tau_c\}, \\
\textsf{\small Margin:} &
    U_s^{\mathrm{M}} = \{j \in \mathcal{M}_s^b : m_j > \tau_m\}, \\
\textsf{\small Entropy:} &
    U_s^{\mathrm{H}}
    = \{j \in \mathcal{M}_s^b : h_j < \tau_h\},
\end{eqnarray*}
where
$c_j = p_j^{(1)}$, $m_j = p_j^{(1)} - p_j^{(2)}$, 
$h_j = -\sum_{v \in V} \pi_j(v)\log \pi_j(v)$, and
$\tau_c, \tau_m, \tau_h$
are given thresholds.
If the selected set is empty, the sampler falls back to unmasking the top candidate
position under the same criterion, i.e., $\arg\max_j c_j$ for
{\small\sf Confidence}, $\arg\max_j m_j$ for {\small\sf Margin}, and $\arg\min_j h_j$ for {\small\sf Entropy}.

PARD applies the same criteria, but restricts unmasking to the
leftmost accepted prefix.
It scans the masked positions from left to
right and unmasks consecutive positions until the first one that fails the
criterion.
Specifically, it selects the sets of unmasking positions as:
\begin{align*}
    U_s^{\mathrm{AR}\text{-}\mathrm{C}}
    &=
    \{j \in \mathcal{M}_s^b :
    c_i > \tau_c \ \forall i \in \mathcal{M}_s^b,\, i \leq j\}, \\
    U_s^{\mathrm{AR}\text{-}\mathrm{M}}
    &=
    \{j \in \mathcal{M}_s^b :
    m_i > \tau_m \ \forall i \in \mathcal{M}_s^b,\, i \leq j\}, \\
    U_s^{\mathrm{AR}\text{-}\mathrm{H}}
    &=
    \{j \in \mathcal{M}_s^b :
    h_i < \tau_h \ \forall i \in \mathcal{M}_s^b,\, i \leq j\}.
\end{align*}
If this prefix is empty, it falls back to unmasking the
leftmost masked position, i.e., $U_s=\{\min \mathcal{M}_s^b\}$.

Let $\hat{x}_s^{b,j} = \arg\max_{v \in V} p_\theta(x_0^{b,j}=v \mid x_0^{<b}, x_s^b)$.
The block is then updated as
\[
x_{s+1}^{b,j} =
\begin{cases}
\hat{x}_s^{b,j} & j \in U_s, \\
x_s^{b,j} & \text{otherwise.}
\end{cases}
\]
where $U_s$ is chosen from the corresponding arbitrary-order or PARD's unmasking sets defined above.
The update is repeated until all positions in the current block are unmasked.
Generation then proceeds block by block until an \texttt{[EOS]} token is generated or the maximum generation length is reached.

\section{Experiments}
\label{sec:experiments}

\begin{figure*}[t]
    \centering
    \includegraphics[width=\linewidth]{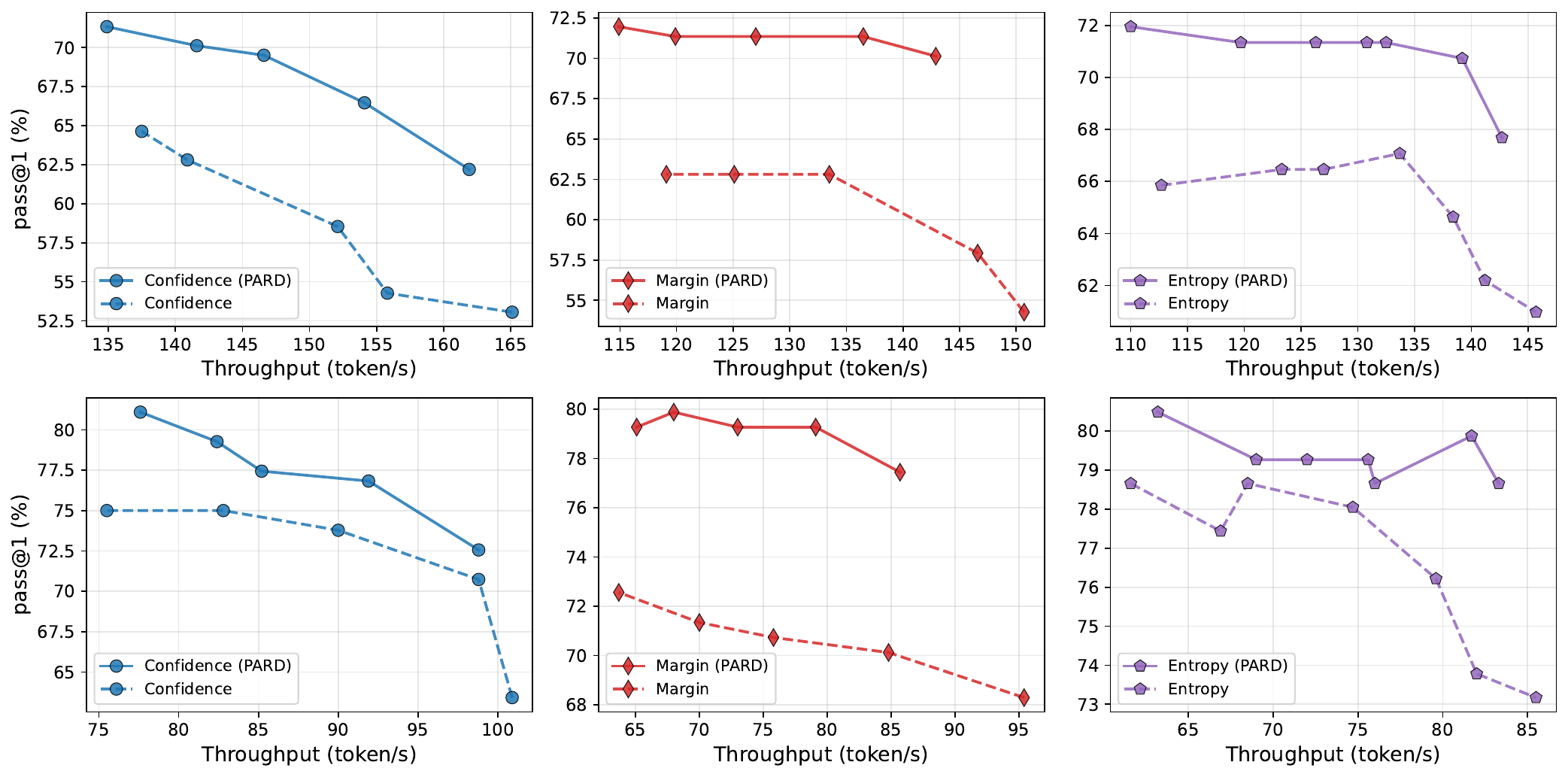}
    \captionof{figure}{
        {\it HumanEval} Base pass@1 vs.\ decoding throughput (token/s) for
        {\small\sf Confidence} (left), {\small\sf Margin} (middle), and {\small\sf Entropy} (right) parallel sampling and their PARD
        variants on Fast-dLLM v2 7B (top row) and SDAR 8B (bottom row).
    }
    \label{fig:humaneval-tps-2x3}
    \vspace{-8pt}
\end{figure*}

\begin{figure}[t]
    \centering
    \includegraphics[width=\linewidth]{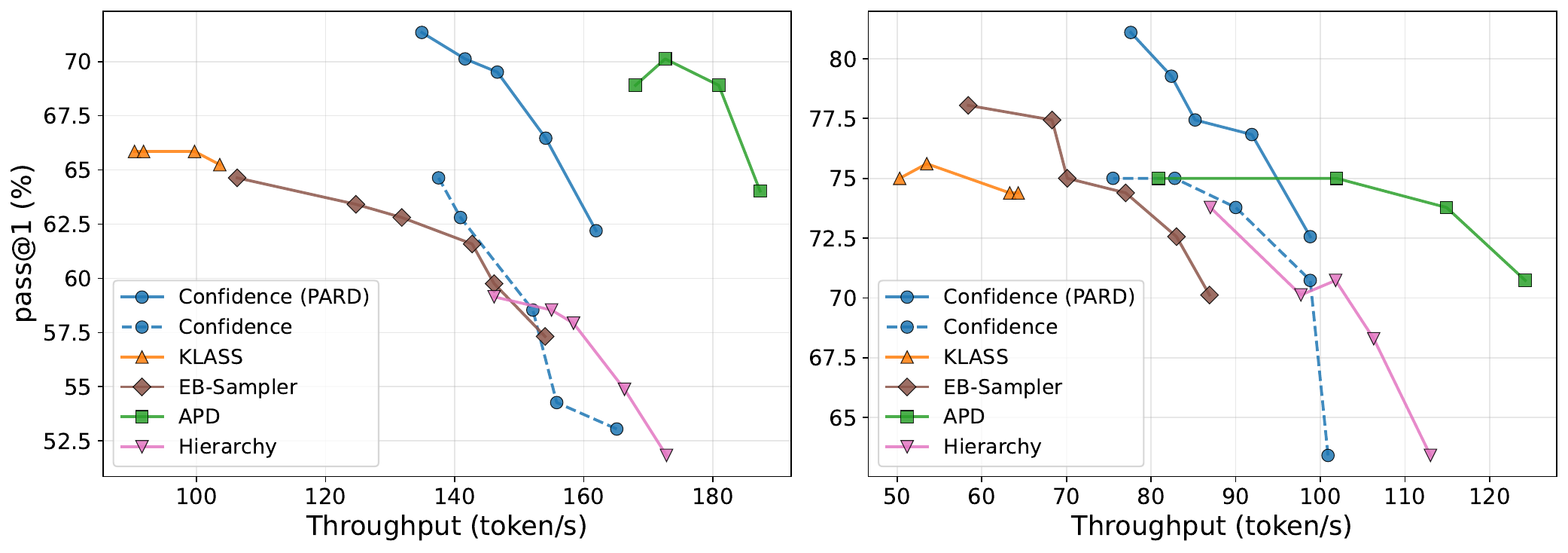}
    \captionof{figure}{
        {\it HumanEval} Base pass@1 vs.\ decoding throughput (token/s)
        for confidence-based PARD, EB-Sampler, KLASS, APD, and Hierarchy
        on Fast-dLLM v2 7B (left) and SDAR 8B (right).
    }
    \label{fig:humaneval-tps-1x2}
    \vspace{-8pt}
\end{figure}

In this section,
we 
first
evaluate the speed-quality trade-off by varying the parallel decoding thresholds.
Next, we focus on
the generation qualities 
and
evaluate 
across various tasks and models.

We evaluate three recent BDLMs: Fast-dLLM v2 7B~\citep{fastdllmv2},
SDAR 8B~\citep{sdar}, and LLaDA2.1-Mini 16B~\citep{llada21}.
All models use a block size of $32$, matching their default configurations.
For Fast-dLLM v2, we follow its default sub-block inference strategy with sub-block size $8$.
LLaDA2.1-Mini additionally supports token editing,
where previously unmasked tokens can be revised when a new prediction exceeds an editing threshold.
More details about the models are in Appendix \ref{sec:appendix_checkpoints}.

\subsection{Speed-Quality Trade-Off}
\label{sec:tradeoff}

We first evaluate PARD on
how it improves the trade-off between generation quality and decoding efficiency.
We compare {\small\sf Confidence}, {\small\sf Margin}, and {\small\sf Entropy} parallel sampling with their PARD variants,
sweeping the corresponding thresholds $\tau_c$, $\tau_m$, and $\tau_h$.
Details on the threshold values are in Appendix~\ref{sec:appendix_thresholds}.
Evaluation is based on
Fast-dLLM v2 and SDAR, 
conducted on the {\it HumanEval} Base with a batch size of 4.

Figure~\ref{fig:humaneval-tps-2x3} shows the pass@1-throughput curves. Results on 
the pass@1-Number of Function Evaluations (NFE) curves are shown in 
Figure \ref{fig:humaneval-nfe-2x3} 
of Appendix~\ref{sec:appendix_tradeoff}.
As can be seen, applying the AR prefix constraint consistently improves the trade-off,
shifting the curves toward higher pass@1 at comparable throughput.
{\small\sf Margin} shows the largest gap between the original sampler and its PARD variant.
This suggests that a token-level margin criterion alone is insufficient for parallel sampling,
but becomes much more effective when paired with a left-to-right prefix constraint.
The gains are also clear for {\small\sf Confidence} and {\small\sf Entropy}, indicating that 
PARD is 
advantageous 
for various criteria.

\begin{table*}[t]
\centering
\small
\resizebox{\textwidth}{!}{%
\begin{tabular}{llccccccccc}
\toprule
\multirow{2}{*}{\textbf{Type}} & \multirow{2}{*}{\textbf{Method}} & \multicolumn{2}{c}{\textbf{HumanEval}} & \multicolumn{2}{c}{\textbf{MBPP}} & \multirow{2}{*}{\textbf{GSM8K}} & \multirow{2}{*}{\textbf{MATH}} & \multirow{2}{*}{\textbf{IFEval}} & \multirow{2}{*}{\textbf{BBH}} & \multirow{2}{*}{\textbf{Avg}} \\
\cmidrule(lr){3-4} \cmidrule(lr){5-6}
 & & Base & Plus & Base & Plus & & & & & \\
\midrule
\rowcolor{gray!15}
\multicolumn{11}{c}{\textit{\textbf{Fast-dLLM v2 7B}}} \\
\multirow{5}{*}{Sequential}
& Confidence           & 65.9 & 61.0 & 63.5 & 53.4 & 83.6 & 60.7 & 60.1 & 51.9 & 62.5 \\
& Margin               & 62.8 & 57.9 & 60.3 & 50.8 & 82.9 & 60.0 & 57.9 & 50.7 & 60.4 \\
& Entropy              & 65.9 & 61.0 & 65.3 & 56.1 & 82.6 & 62.4 & 60.8 & 53.0 & 63.4 \\
& Uncode               & 68.9 & 64.6 & 67.2 & 56.4 & 84.1 & 63.4 & 62.3 & 54.0 & 65.1 \\
& AR                   & \underline{72.0} & \underline{67.7} & \underline{69.3} & \underline{58.7} & \underline{84.3} & \underline{65.0} & \underline{62.5} & \underline{54.6} & \underline{66.8} \\
\hline
\multirow{4}{*}{Parallel}
& KLASS                & 65.9 & 61.0 & 63.5 & 53.4 & \textbf{83.8} & 61.0 & 60.3 & 52.1 & 62.6 \\
& EB-Sampler           & 62.8 & 59.1 & 63.8 & 54.5 & 82.3 & 58.7 & 61.0 & 51.7 & 61.7 \\
& Confidence Parallel  & 64.6 & 59.8 & 63.5 & 52.9 & 82.5 & 58.0 & \textbf{62.1} & 51.4 & 61.9 \\
& APD          & 64.0 & 59.8 & 64.0 & 54.8 & 81.3 & 58.8 & 58.4 & 49.8 & 61.3 \\
& Hierarchy         & 59.1 & 54.9 & 61.1 & 51.3 & 81.5 & 58.1 & 59.7 & 50.2 & 59.5 \\
& PARD    & \textbf{71.3} & \textbf{67.1} & \textbf{69.6} & \textbf{58.7} & \textbf{83.8} & \textbf{63.5} & \textbf{62.1} & \textbf{54.3} & \textbf{66.3} \\

\midrule
\rowcolor{gray!15}
\multicolumn{11}{c}{\textit{\textbf{SDAR 8B}}} \\
\multirow{5}{*}{Sequential}
& Confidence           & 75.0 & 70.1 & 75.1 & 64.0 & 89.8 & 71.4 & 58.8 & 62.2 & 70.8 \\
& Margin               & 71.3 & 65.2 & 73.3 & 61.4 & 89.5 & 70.3 & 56.9 & 61.8 & 68.7 \\
& Entropy              & 77.4 & 71.3 & 77.0 & 65.6 & 91.0 & 72.1 & 59.7 & 65.5 & 72.5 \\
& Uncode               & 78.0 & 72.6 & 76.2 & 65.1 & \underline{91.2} & \underline{73.3} & 60.1 & 66.2 & 72.8 \\
& AR                   & \underline{79.3} & \underline{73.2} & \underline{78.0} & \underline{66.4} & 91.1 & \underline{73.3} & \underline{61.0} & \underline{66.5} & \underline{73.6} \\
\hline
\multirow{4}{*}{Parallel}
& KLASS                & 75.0 & 70.1 & 75.1 & 64.3 & 89.9 & 70.2 & 57.9 & 61.5 & 70.5 \\
& EB-Sampler           & 75.0 & 69.5 & \textbf{78.0} & \textbf{66.9} & 90.0 & 69.6 & 59.5 & 64.7 & 71.7 \\
& Confidence Parallel  & 75.0 & 70.1 & 74.9 & 63.8 & 89.1 & 67.7 & 58.8 & 61.9 & 70.2 \\
& APD          & 70.7 & 66.5 & 72.5 & 61.4 & 89.0 & 67.8 & 59.3 & 61.1 & 68.5 \\
& Hierarchy         & 73.8 & 68.3 & 73.5 & 62.7 & 87.7 & 65.8 & 58.2 & 61.6 & 69.0 \\
& PARD    & \textbf{81.1} & \textbf{76.2} & 77.0 & 65.3 & \textbf{91.4} & \textbf{71.6} & \textbf{60.4} & \textbf{65.8} & \textbf{73.6} \\

\midrule
\rowcolor{gray!15}
\multicolumn{11}{c}{\textit{\textbf{LLaDA2.1-Mini 16B}}} \\
\multirow{5}{*}{Sequential}
& Confidence           & 75.0 & 72.6 & 82.3 & 69.8 & 91.7 & 81.8 & 83.9 & 86.3 & 80.4 \\
& Margin               & 78.1 & 74.4 & 81.5 & 68.8 & 91.4 & 82.1 & 83.7 & 86.8 & 80.8 \\
& Entropy              & 79.3 & 75.6 & 82.8 & 69.3 & 91.4 & 81.9 & 84.3 & \underline{87.1} & 81.5 \\
& Uncode               & 76.2 & 72.0 & 83.6 & 71.2 & 92.0 & \underline{82.3} & \underline{84.7} & 86.7 & 81.1 \\
& AR                   & \underline{87.8} & \underline{84.8} & \underline{85.7} & \underline{72.2} & \underline{92.3} & 82.2 & 84.3 & 86.9 & \underline{84.5} \\
\hline
\multirow{4}{*}{Parallel}
& KLASS                & 78.7 & 76.2 & 80.7 & 68.2 & 91.3 & \textbf{82.3} & \textbf{84.1} & 85.8 & 80.9 \\
& EB-Sampler           & 80.5 & 77.4 & 81.5 & 69.6 & 91.4 & \textbf{82.3} & 83.0 & 86.0 & 81.5 \\
& Confidence Parallel  & 84.8 & 82.3 & 78.8 & 66.7 & 90.5 & 81.6 & 81.9 & 85.2 & 81.5 \\
& Hierarchy         & 81.1 & 76.8 & 79.4 & 67.2 & 91.6 & 81.8 & 80.4 & 85.6 & 80.5 \\
& PARD    & \textbf{86.6} & \textbf{82.9} & \textbf{84.9} & \textbf{71.4} & \textbf{92.0} & \textbf{82.3} & \textbf{84.1} & \textbf{86.4} & \textbf{83.9} \\
\bottomrule
\end{tabular}
}
\caption{Performance across six benchmarks on three recent BDLMs.
Methods are grouped into sequential and parallel samplers.
For each benchmark, the best result among parallel samplers is \textbf{bolded}, and the best result among sequential samplers is \underline{underlined}.}
\label{tab:main_results}
\vspace{-1pt}
\end{table*}

We further compare confidence-based PARD with advanced parallel samplers, KLASS \cite{klass}, EB-Sampler \cite{ebsampler}, APD \cite{apd}, and Hierarchy decoding \cite{hierarchy}.
APD is particularly related to PARD because it also unmasks a longest
agreed prefix, where the prefix is verified by an external small AR model.
We vary their hyperparameters, 
details are provided in Appendix~\ref{sec:appendix_thresholds}.


Figure~\ref{fig:humaneval-tps-1x2} shows the pass@1-throughput curves.
Results on the pass@1-NFE curves are in Figure~\ref{fig:humaneval-nfe-1x2} of Appendix~\ref{sec:appendix_tradeoff}.
Across both models, PARD better preserves pass@1 while improving throughput. 
KLASS has the lowest throughput, and its throughput changes only slightly
across KL thresholds. 
This is because its KL-stability criterion requires
high-precision full-vocabulary softmax operations for numerical stability.
APD achieves higher throughput than PARD on Fast-dLLM v2,
but its best pass@1 remains lower than that of PARD on both models.
Overall, these results suggest that preserving the AR prefix structure is
more efficient and effective for BDLM decoding than using more complex arbitrary-order subset-selection rules.
We observe similar trends on \textit{MBPP} Base, with results
reported in Appendix~\ref{sec:appendix_tradeoff}.

\begin{table*}[t]
\centering
\small
\resizebox{\textwidth}{!}{%
\begin{tabular}{llcccccc}
\toprule
\textbf{Model} & \textbf{Method} & \textbf{HumanEval} & \textbf{MBPP} & \textbf{GSM8K} & \textbf{MATH} & \textbf{IFEval} & \textbf{BBH} \\
\midrule
\multirow{2}{*}{Fast-dLLM v2 7B}
  & AR                &  85.8 &  82.7 &  96.1 &  69.6 &  64.8 &  77.6 \\
  & PARD & 134.9 (1.57$\times$) & 112.5 (1.36$\times$) & 179.9 (1.87$\times$) & 152.0 (2.18$\times$) & 111.2 (1.72$\times$) & 113.5 (1.46$\times$) \\
\midrule
\multirow{2}{*}{SDAR 8B}
  & AR                &  46.5 &  42.4 &  46.2 &  40.7 &  32.0 &  40.1 \\
  & PARD &  77.6 (1.67$\times$) &  72.6 (1.71$\times$) & 108.2 (2.34$\times$) & 124.9 (3.07$\times$) &  59.4 (1.86$\times$) &  60.5 (1.51$\times$) \\
\midrule
\multirow{2}{*}{LLaDA2.1-Mini}
  & AR                &  12.1 &  10.6 &  10.0 &  10.9 &   9.1 &  11.6 \\
  & PARD &  44.1 (3.64$\times$) &  36.8 (3.47$\times$) &  29.4 (2.94$\times$) &  28.9 (2.65$\times$) &  11.7 (1.29$\times$) &  25.2 (2.17$\times$) \\
\bottomrule
\end{tabular}
}
\caption{Decoding throughput (tokens/sec) of AR and confidence-based PARD on six benchmarks.
Speedup over AR is shown in parentheses.}
\label{tab:speed}
\vspace{-1pt}
\end{table*}

\subsection{Generation Quality and Efficiency}
\label{sec:benchmark_performance}

Instead of sweeping the thresholds $\tau_c,
\tau_m,
\tau_h$ as in 
Section~\ref{sec:tradeoff},
in this section we focus on the commonly-used threshold settings and
evaluate generation quality across a broader set of tasks and models.

We use six benchmarks spanning code
generation, mathematical reasoning, instruction following, and general
reasoning.
For code generation, we evaluate {\it HumanEval}~\citep{humaneval} and
{\it MBPP}~\citep{mbpp}, reporting pass@1 on both the Base and Plus variants.
For mathematical reasoning,  we use
{\it GSM8K}~\citep{gsm8k} and {\it MATH}~\citep{math};
for instruction following,
{\it IFEval}~\citep{ifeval};
and for general reasoning,
{\it BigBenchHard (BBH)}~\citep{bbh}.
For LLaDA2.1-Mini, 
we evaluate \textit{BBH} on an 8-task subset
due to computational constraints.
Details are in Appendix \ref{sec:appendix_benchmarks}.

For models,
in addition to Fast-dLLM v2 7B and SDAR 8B, we include
LLaDA2.1-Mini 16B~\citep{llada21}, a recent BDLM which supports token editing.
Since these BDLMs use {\small\sf Confidence Parallel}~\citep{fast-dllm}
as their default decoding strategy,
we compare against it and use confidence-based PARD in this benchmark evaluation.
Following their default configurations, we
use $\tau_c{=}0.9$ for Fast-dLLM v2 and SDAR, and $\tau_c{=}0.7$ for LLaDA2.1-Mini.
For LLaDA2.1-Mini, we use the default editing threshold
$\tau_{\mathrm{edit}}{=}0.5$.

We include the sequential strategies of {\small\sf Confidence}, {\small\sf Margin} and {\small\sf Entropy} as additional baselines, each of which unmasks one token per step.
We also compare with pure AR sampling, which always unmasks the leftmost masked
position, and Uncode~\citep{uncode}, a recent strong sequential sampler with 
position-aware and token informativeness prior.
For the parallel baselines KLASS, EB-Sampler, and APD, 
we follow their default settings, 
using a KL-divergence threshold of $0.01$, an entropy-bound threshold of $0.1$, and a mixture weight of 0.5, respectively. 
We report APD only for Fast-dLLM v2 and SDAR, where a small AR verifier sharing the same tokenizer as the target DLM is available.
For Hierarchy decoding, we use the default low confidence threshold of $0.5$ \cite{hierarchy},
while setting the high confidence threshold to the same value as $\tau_c$, 
as they serve the same role.

Table~\ref{tab:main_results}
shows the results of various sampling methods.
Among the parallel samplers, PARD
achieves the best average accuracy on every model, outperforming {\small\sf Confidence Parallel} by $4.4$, $3.4$, and $2.4$ points on Fast-dLLM v2, SDAR, and LLaDA2.1-Mini, respectively.
The improvements are more pronounced on code generation tasks.
This may be because arbitrary-order samplers tend to bypass uncertain decision points and commit easier later tokens first~\citep{flexibilitytrap}, which can prematurely constrain the program structure.
We provide a qualitative example in Appendix~\ref{sec:appendix_case_study}
illustrating this behavior.
Taking the sequential samplers also into account,
pure AR achieves the best accuracy on all three models.
PARD performs strongly, matching AR 
on SDAR and trailing AR by only $0.5$ and $0.6$ points on
Fast-dLLM v2 and LLaDA2.1-Mini, respectively. 
We additionally evaluate with stochastic top-$p$ sampling in Appendix \ref{sec:appendix_top-p},
where PARD retains the highest mean performance among the competitive parallel samplers.

Although AR achieves the strongest generation quality, it is sequential.
We measure the decoding throughput of AR and confidence-based PARD across
benchmarks using a batch size of $4$.
Table~\ref{tab:speed} reports throughput in generated tokens per second, with
speedup over AR shown in parentheses.
Confidence-based PARD is consistently faster than AR across the
benchmarks, often by a large margin, 
achieving up to $2.18\times$, $3.07\times$, and $3.64\times$ speedup on Fast-dLLM v2, SDAR, and LLaDA2.1-Mini, respectively.
These results highlight the role of PARD as an efficiency-oriented alternative to pure AR sampling.
We further show the distribution of the number of tokens unmasked per step 
of PARD in
Appendix~\ref{sec:appendix_num_unmasked}.

\begin{table}[t]
\centering
\small
\resizebox{\linewidth}{!}{%
\begin{tabular}{lccc}
\toprule
\textbf{Method} & \textbf{Fast-dLLM v2 7B} & \textbf{SDAR 8B} & \textbf{LLaDA2.1-Mini} \\
\midrule
PARD         & 69.6 & 77.0 & 84.9 \\
\midrule
Leftmost ($k{=}2$)       & 49.5 & 57.7 & 76.5 \\
Leftmost ($k{=}3$)       & 34.4 & 34.7 & 69.0 \\
Leftmost ($k{=}4$)       & 22.8 & 23.5 & 61.6 \\
\bottomrule
\end{tabular}
}
\caption{\textit{MBPP} Base pass@1 for PARD and static leftmost variants on Fast-dLLM v2 7B, SDAR 8B, and LLaDA2.1-Mini.
}
\label{tab:static_parallel}
\vspace{-5pt}
\end{table}

\subsection{Ablation }
In this experiment, we ablate the effect of adaptive prefix-length selection by
comparing confidence-based PARD with a static leftmost variant, 
which always unmasks a fixed number ($k$) of leftmost masked
tokens per step.
Evaluation is performed
on {\it MBPP}.
PARD uses the same thresholds as in Section \ref{sec:benchmark_performance}.

As shown in Table~\ref{tab:static_parallel}, the static variants suffer large performance drops across all three models.
Even with only $k=2$, pass@1 drops by 20.1 points on Fast-dLLM v2 7B, 19.3 points on SDAR 8B, and 8.4 points on LLaDA2.1-Mini.
The degradation becomes even more severe as $k$ increases, with $k=4$ reducing pass@1 by 46.8, 53.5, and 23.3 points, respectively.
These results show that simply committing multiple leftmost tokens is too aggressive.
Effective parallel left-to-right decoding must also decide when the leftmost prefix is sufficiently reliable.


\section{Conclusion}
\label{sec:conclusion}
We revisited sampling for recent BDLMs in this work.
Both empirical observations and training-context analysis suggest that BDLMs are better matched to left-to-right decoding than to fully arbitrary-order sampling.
To recover parallel decoding efficiency, we introduced PARD, which preserves the AR bias while committing multiple confident tokens per step.
Across three recent BDLMs and six benchmarks, PARD substantially improves throughput over pure AR while incurring only a small quality drop,
and consistently outperforms existing parallel samplers in generation quality.

\section*{Limitations}
Our study focuses on inference-time sampling for recent block diffusion language models, and does not modify the training objective. 
PARD is designed as a simple plug-and-play sampler for existing BDLMs, so we do not consider training-based approaches such as sampling-aware fine-tuning.
These directions are complementary to our work and may further improve the quality--efficiency trade-off. 
In addition, due to computational constraints, our experiments focus on publicly available BDLMs in the 7B--16B scale, and we do not evaluate larger models such as 100B-scale BDLMs.
Studying whether the same decoding behavior and speed--quality trade-off persist at larger scales is an interesting direction for future work.


\bibliography{references}


\appendix
\label{sec:appendix}

\section{Definitions of Local AR-ness@$k$ and Global AR-ness@$k$}
\label{sec:appendix_arness}
Let $L'$ be the block size.
For a generated block, let $M_{s-1} \subseteq \{1,\ldots,L'\}$ denote the set of masked positions before step $s$,
and let $p_s \in M_{s-1}$ be the position unmasked at step $s$.
Following \citet{diffucoder}, we define local AR-ness by the step-level indicator
\[
I_{\mathrm{loc}}(s,k)=
\begin{cases}
1, & \{p_{s-i}\}_{i=1}^{k}=\{p_s-i\}_{i=1}^{k},\\
0, & \text{otherwise}.
\end{cases}
\]
The block-level local AR-ness@$k$ is computed as
\[
\frac{1}{L'} \sum_{s=1}^{L'} I_{\mathrm{loc}}(s,k).
\]

For global AR-ness, let $\mathrm{Left}_k(M_{s-1})$ denote the $k$ leftmost positions in $M_{s-1}$.
We define
\[
I_{\mathrm{glob}}(s,k)=
\begin{cases}
1, & p_s \in \mathrm{Left}_k(M_{s-1}),\\
0, & \text{otherwise}.
\end{cases}
\]
The block-level global AR-ness@$k$ is computed as
\[
\frac{1}{L'}\sum_{s=1}^{L'} I_{\mathrm{glob}}(s,k).
\]

\section{Proofs}

\subsection{Proof of Proposition~\ref{prop:ar-mass}}
\label{sec:prop1_proof}
\begin{proof}
Given a clean training sequence $x_0$.
Assume $\alpha_t = 1-t$.
During training, we draw $t \sim U[0,1]$, then draw the noised sequence $x_t \sim q(\cdot \mid x_0)$.
Each token position is then masked independently by probability $1-\alpha_t = t$.
Let $E = \{\mathbf{m}_\ell^{\mathrm{AR}}\ observed\}$ denote the event of interest.

For the MDM case, we have
\begin{align}
    \Pr(E \mid t) &= \alpha_t^{\ell-1} \cdot (1-\alpha_t)^{L-\ell+1} \\
    &= (1-t)^{\ell-1} \cdot t^{L-\ell+1}
\end{align}
Then marginalize over $t \sim U[0,1]$,
\begin{align}
    \Pr(E) &= \mathbb{E}_{t\sim U[0,1]} \Pr(E \mid t) \\
    &= \int_0^1 (1-t)^{\ell-1} \cdot t^{L-\ell+1} dt \\
    &= \frac{(\ell-1)! \, (L-\ell+1)!}{(L+1)!} \\ \label{beta}
    &= \frac{1}{(L+1) \binom{L}{\ell-1}}
\end{align}
where \eqref{beta} uses the definition of the Beta function.

For the block diffusion case, 
let $k=\ell-1-(b-1)L'$ be the number of positions before $\ell$ in this block $b$.
Note that the positions in the conditioning prefix ($x^{<b}_0$) are deterministically unmasked.
We have
\begin{align}
    \Pr(E \mid t) &= \alpha_t^{k} \cdot (1-\alpha_t)^{L'-k} \\
    &= (1-t)^{k} \cdot t^{L'-k}
\end{align}
Then marginalize over $t \sim U[0,1]$,
\begin{align}
    \Pr(E) &= \mathbb{E}_{t\sim U[0,1]} \Pr(E \mid t) \\
    &= \int_0^1 (1-t)^{k} \cdot t^{L'-k} dt \\
    &= \frac{k!(L'-k)!}{(L'+1)!} \\
    &= \frac{1}{(L'+1) \binom{L'}{k}}
\end{align}
\end{proof}

\subsection{Proof of Proposition~\ref{prop:alignment}}

\label{sec:prop2_proof}
\begin{proof}
Recall the definition of \emph{AR pattern alignment}:
Under MDM, it is
\begin{align*}
\rho_\ell(\mathbf{M}^{\mathrm{MDM}})
&= \frac{1}{L-1}
\Bigg[
\sum_{j=1}^{\ell-1}(1-\mathbf{M}^{\mathrm{MDM}}_j) \\
&\qquad\qquad\qquad
+ \sum_{j=\ell+1}^{L}\mathbf{M}^{\mathrm{MDM}}_j
\Bigg].
\end{align*}
Similarly, under block diffusion, it is
\begin{align*}
\rho_\ell(\mathbf{M}^{\mathrm{BDLM}})
&= \frac{1}{bL'-1}
\Bigg[
\sum_{j=1}^{\ell-1}(1-\mathbf{M}^{\mathrm{BDLM}}_j) \\
&\qquad\qquad\qquad
+ \sum_{j=\ell+1}^{bL'}\mathbf{M}^{\mathrm{BDLM}}_j
\Bigg].
\end{align*}

For the MDM case,
for all positions $j \in \{1, ...,L\}$, 
$\mathbf{M}^{\text{MDM}}_j \mid t \sim \text{Bernoulli}(1-\alpha_t)$ iid, with $t \sim U[0,1]$.
Assume the linear schedule $\alpha_t=1-t$.
\begin{align*}
    \mathbb{E} [\mathbf{M}^{\text{MDM}}_j] &= \mathbb{E}_{t\sim U[0,1]}[\mathbb{E} [\mathbf{M}^{\text{MDM}}_j \mid t]] \\
    &= \mathbb{E}_{t\sim U[0,1]}[1-\alpha_t] \\
    &= \int_0^1 t \, dt =\frac{1}{2}.
\end{align*}
Similarly,
\begin{align*}
    \mathbb{E} [1-\mathbf{M}^{\text{MDM}}_j] &= \mathbb{E}_{t\sim U[0,1]}[\mathbb{E} [1-\mathbf{M}^{\text{MDM}}_j \mid t]] \\
    &= \int_0^1 (1-t) \, dt =\frac{1}{2}.
\end{align*}
Therefore,
\begin{align*}
\mathbb{E}
[
\rho_\ell(\mathbf{M}^{\text{MDM}})
] &= \frac{1}{L-1} \bigg[(\ell-1) \frac{1}{2} + (L-\ell)\frac{1}{2} \bigg] \\
        &= \frac{1}{2}
\end{align*}

For the block diffusion case,
$\mathbf{M}^{\text{BDLM}}_j=0$ for $j \in \{1,...,(b-1)L'\}$ deterministically.
For positions $j \in \{(b-1)L' + 1, ..., bL'\}$,
$\mathbf{M}^{\text{BDLM}}_j \mid t \sim \text{Bernoulli}(1-\alpha_t)$ iid, with $t \sim U[0,1]$.
Let $k=\ell-1-(b-1)L'$ be the number of positions before $\ell$ in this block $b$.
Let $r=(b-1)L'$, we split the first sum at the block-$b$ boundary:
\begin{align*}
\mathbb{E} \bigg[
\sum_{j=1}^{\ell-1} (1-\mathbf{M}^{\mathrm{BDLM}}_j)
\bigg]
&= \sum_{j=1}^{r} 
\mathbb{E}[1-\mathbf{M}^{\mathrm{BDLM}}_j] \\
& + \sum_{j=r+1}^{\ell-1} 
\mathbb{E}[1-\mathbf{M}^{\mathrm{BDLM}}_j] \\
&= (b-1)L' + \frac{k}{2}.
\end{align*}
For the second sum,
\begin{align*}
    \mathbb{E} \bigg[
    \sum_{j=\ell+1}^{bL'} \mathbf{M}^{\text{BDLM}}_j
    \bigg]
    &= \frac{L'-k-1}{2}
\end{align*}
Therefore, combining both expected sums,
\begin{align*}
& \mathbb{E}
[
\rho_\ell(\mathbf{M}^{\text{BDLM}})
] \\
&= \frac{1}{bL'-1} \bigg[
           (b-1) L' + \frac{k}{2} + \frac{L'-k-1}{2}
           \bigg] \\
        &= \frac{1}{2} + \frac{(b-1)L'}{2(bL'-1)}
\end{align*}

\end{proof}

\section{Experimental Details}
\label{sec:appendix_setup}

\subsection{Model Checkpoints}
\label{sec:appendix_checkpoints}

We use the following publicly available Hugging Face checkpoints in our experiments.

\begin{itemize}
    \item \textbf{Fast-dLLM v2 7B.}
    The model is initialized from Qwen2.5-7B-Instruct \citep{qwen2.5} and
    adapted to block diffusion paradigm with $\sim$ 1B tokens of fine-tuning.
    We use \texttt{Efficient-Large-Model/Fast\_dLLM\_v2\_7B}\footnote{\url{https://huggingface.co/Efficient-Large-Model/Fast_dLLM_v2_7B}},
    the 7B checkpoint released by~\citet{fastdllmv2}.

    \item \textbf{SDAR 8B.}
    The model is initialized from Qwen3-8B \citep{qwen3} and
    adapted to the block diffusion paradigm via continued pretraining on $50$B tokens,
    followed by supervised fine-tuning on $4$B instruction tokens.
    We use \texttt{JetLM/SDAR 8B-Chat-b32}\footnote{\url{https://huggingface.co/JetLM/SDAR-8B-Chat-b32}},
    the block-size-$32$ version 8B checkpoint released by~\citet{sdar}.

    \item \textbf{LLaDA 8B.}
    This model is trained from scratch with MDM objective.
    We use \texttt{GSAI-ML/LLaDA 8B-Instruct}\footnote{\url{https://huggingface.co/GSAI-ML/LLaDA-8B-Instruct}},
    released by \citet{llada}.
    This model is used in the comparison experiment in Section~\ref{sec:obs}
    to contrast the decoding behavior of an MDM with that of a BDLM.
    We use it with \texttt{DualCache} from \citet{fast-dllm} for faster inference.

    \item \textbf{LLaDA2.1-Mini 16B.}
    A Mixture-of-Experts (MoE) block diffusion language model.
    Generation follows a block-wise denoising loop with
    token-to-token (T2T) editing within each block.
    We use \texttt{inclusionAI/LLaDA2.1-mini}\footnote{\url{https://huggingface.co/inclusionAI/LLaDA2.1-mini}},
    released by~\citet{llada21}.
\end{itemize}

All checkpoints are loaded and evaluated in \texttt{bfloat16}.

\subsection{Hyperparameter Values and APD Verifier Models}
\label{sec:appendix_thresholds}
In the speed--quality trade-off experiments, we use the same set of threshold values for both Fast-dLLM v2 and SDAR.
Specifically, we use $\tau_c \in \{0.7, 0.75, 0.8, 0.85, 0.9\}$ for {\small\sf Confidence},
$\tau_m \in \{0.7, 0.8, 0.9, 0.95, 0.97\}$ for {\small\sf Margin}, and
$\tau_h \in \{0.1, 0.2, 0.3, 0.4, 0.5, 0.6, 0.7\}$ for {\small\sf Entropy}.

For the comparison with advanced parallel samplers,
we use $\tau_c \in \{0.7, 0.75, 0.8, 0.85, 0.9\}$ for confidence-based PARD,
entropy-bound thresholds in $\{0.01, 0.05, 0.1, 0.2, 0.3, 0.4\}$ for EB-Sampler,
KL-divergence thresholds in $\{0.01, 0.1, 1.0, 10.0\}$ for KLASS,
and mixture weights in $\{0.01, 0.1, 0.3, 0.5\}$ for APD.
For Hierarchy, which uses high- and low-confidence thresholds as well as a remasking step, 
we sweep the high threshold $\tau_{\mathrm{high}}$ over the same grid as $\tau_c$,
since both thresholds play a similar role in controlling confidence-based token acceptance.
We set the low threshold to $\tau_{\mathrm{low}}=0.5$, as suggested by the authors~\cite{hierarchy}, and disable remasking, as we empirically found it to degrade performance.

APD requires an external small AR verifier that shares the same tokenizer as the target DLM. 
Following the original APD setup~\cite{apd}, we use verifiers at the 0.5--0.6B scale: Qwen2.5-0.5B for Fast-dLLM v2 and Qwen3-0.6B for SDAR, matching the model family of each BDLM. 
We omit APD for LLaDA2.1-Mini because we could not find a small AR verifier that shares its tokenizer.

\subsection{Implementation Details}
For all benchmarks and all samplers, we use greedy decoding (temperature $=0$, top-$p$ disabled).
We apply each model's default chat template to all question prompts.
Generation stops at the end-of-sequence token or when the per-task \texttt{max\_new\_tokens} budget is reached.
All experiments run on a single NVIDIA A6000 GPU.

\begin{table}[ht]
\centering
\small
\setlength{\tabcolsep}{4pt}
\renewcommand{\arraystretch}{1.1}
\begin{tabular}{lcc}
\toprule
\textbf{Benchmark} & \textbf{\#-shot} & \textbf{Max new tokens} \\
\midrule
HumanEval & 0 & 768  \\
MBPP      & 0 & 768  \\
GSM8K     & 0 & 2048 \\
MATH      & 0 & 2048 \\
IFEval    & 0 & 2048 \\
BBH       & 0 & 1024 \\
\bottomrule
\end{tabular}
\caption{Per-benchmark evaluation setup.
    ``\#-shot'' counts in-context demonstrations, we use zero-shot for all benchmarks in our evaluation.
}
\label{tab:eval_setup}
\vspace{-5pt}
\end{table}

\subsection{Benchmarks}
\label{sec:appendix_benchmarks}
We use \texttt{evalplus}~\citep{evalplus} for HumanEval and MBPP,
and the \texttt{lm-evaluation-harness}~\citep{lmevalharness} for the remaining tasks.
Table~\ref{tab:eval_setup} summarizes the per-benchmark configuration,
including the number of in-context demonstrations and the maximum new tokens allowed for generation.

For HumanEval and MBPP we report both the Base and extended Plus pass@1,
where a problem is counted as solved under Plus only if it passes both the base and extended test suites.
For IFEval we report the prompt-level strict accuracy.

For LLaDA2.1 BBH, due to computational constraints, we evaluate on a balanced
$8$-task subset of BBH, with
$4$ tasks drawn from each of the two task groupings introduced by
\citet{bbh}:
\begin{itemize}
    \item \textbf{Algorithmic and multi-step arithmetic reasoning}:
        \texttt{boolean\_expressions}, \texttt{navigate},
        \texttt{object\_counting}, \texttt{web\_of\_lies}.
    \item \textbf{Natural language understanding}:
        \texttt{causal\_judgement}, \texttt{date\_understanding},
        \texttt{logical\_deduction\_seven\_objects},
        \texttt{temporal\_sequences}.
\end{itemize}

We report the macro-mean over the eight tasks, following the
convention of \citet{bbh}.

\section{Additional Results}

\subsection{AR Bias Analysis on Dream}
\label{sec:appendix_dream_ar}
Dream is an MDM trained from an AR initialization, unlike LLaDA, which is trained from scratch.
To disentangle the effects of AR initialization and block diffusion training on AR bias, we compare Dream with SDAR in this section.
We use \texttt{Dream-org/Dream-v0-Instruct-7B}.

\begin{table}[H]
\centering
\resizebox{\linewidth}{!}{%
\begin{tabular}{lccc}
\toprule
Method & SDAR 8B & LLaDA 8B & Dream 7B \\
\midrule
Confidence & 75.0 & 45.7 & 51.8 \\
Entropy & 77.4 & 45.7 & 50.6 \\
Margin & 71.3 & 43.3 & 45.7 \\
AR              & 79.3 & 41.5 & 52.4 \\
\bottomrule
\end{tabular}
}
\caption{HumanEval base pass@1 (\%) under arbitrary-order sampling and pure AR-order sampling.}
\label{tab:humaneval-dream-quality-gap}
\end{table}

Table \ref{tab:humaneval-dream-quality-gap} shows the pass@1 values obtained from pure AR sampling with the samplers 
({\small\sf confidence/entropy/margin})
across SDAR, LLaDA, and Dream on \textit{HumanEval}.
On Dream, pure AR sampling achieves the highest pass@1 (52.4).
Note that this is consistent with Figure 3 of \citet{flexibilitytrap},
which likewise reports higher \textit{HumanEval} pass@1 for AR decoding than for arbitrary-order decoding on Dream. 
Nevertheless, the gap between AR and the best sampler is only 0.6 points for Dream, 
compared with 1.9 points for SDAR. 
Thus, its AR bias remains weaker than that of SDAR. 
To further validate this observation, we compare the local and global AR-ness at different $k$ for Dream, SDAR, and LLaDA in 
Figure \ref{fig:dream_ar_ness}.

\begin{figure}[H]
    \centering
    \includegraphics[width=\linewidth]{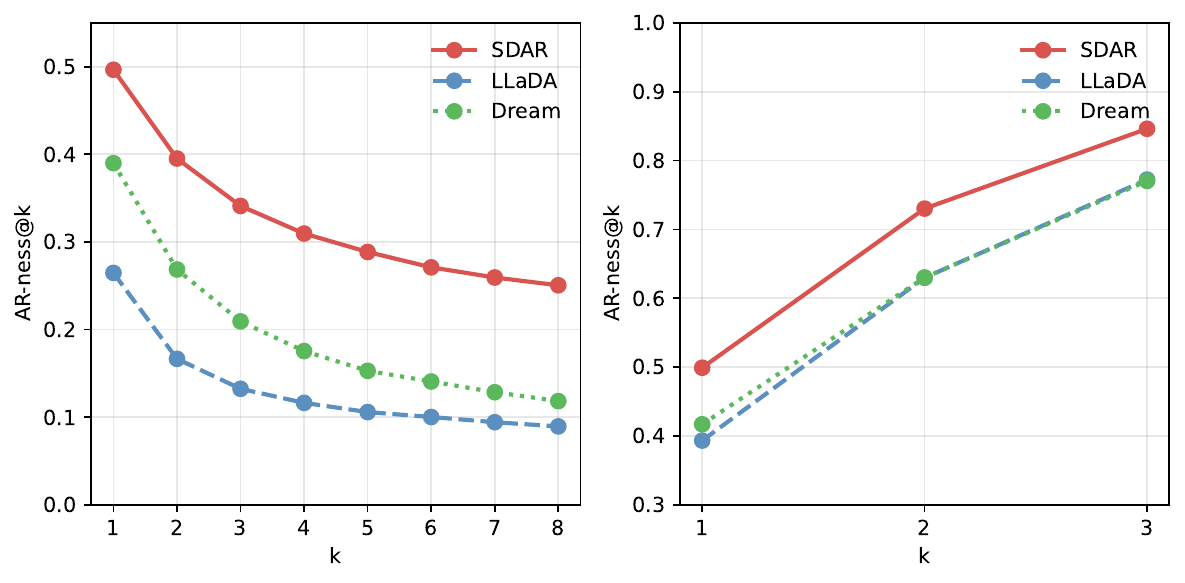}
    \caption{
    Local (left) and global (right) AR-ness@$k$ of decoding on {\it HumanEval} under Confidence sampling.
    }
    \label{fig:dream_ar_ness}
    \vspace{-5pt}
\end{figure}

Dream generally exhibits higher AR-ness than LLaDA. 
However, despite also being initialized from an AR model, its AR-ness remains consistently lower than that of SDAR across all values of $k$.
This provides additional evidence that block-diffusion training contributes to AR bias beyond AR initialization.

\subsection{Speed-Quality Trade-off}
\label{sec:appendix_tradeoff}
Figure \ref{fig:humaneval-nfe-2x3} compares
{\small\sf Confidence}, {\small\sf Margin}, and {\small\sf Entropy} with
their PARD variants, and 
Figure \ref{fig:humaneval-nfe-1x2} compares confidence-based PARD with
EB-Sampler, KLASS, APD, and Hierarchy.
Both of the figures show the pass@1-NFE trade-offs on \textit{HumanEval}.
PARD consistently achieves strong trade-offs in terms of NFE on \textit{HumanEval}.

\begin{figure*}[t]
    \centering
    \includegraphics[width=\linewidth]{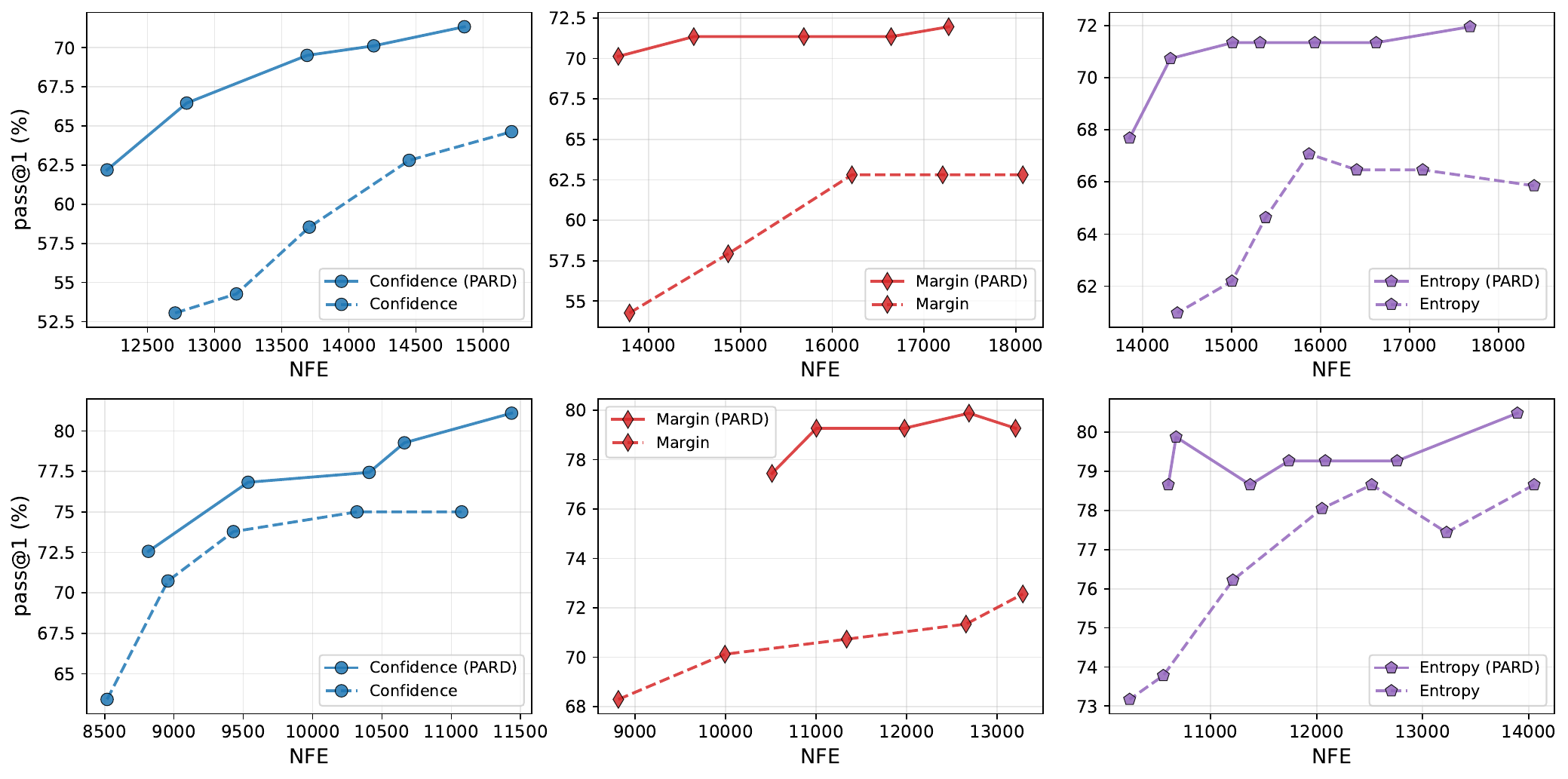}
    \caption{
        \textit{HumanEval} pass@1 vs.\ Number of Function Evaluations (NFE)
        for {\small\sf Confidence} (left), {\small\sf Margin} (middle), and {\small\sf Entropy} (right) parallel sampling and their
PARD variants on Fast-dLLM v2 7B (top row) and SDAR 8B (bottom row).
    }
    \label{fig:humaneval-nfe-2x3}
\end{figure*}

\begin{figure*}[t]
    \centering
    \includegraphics[width=\linewidth]{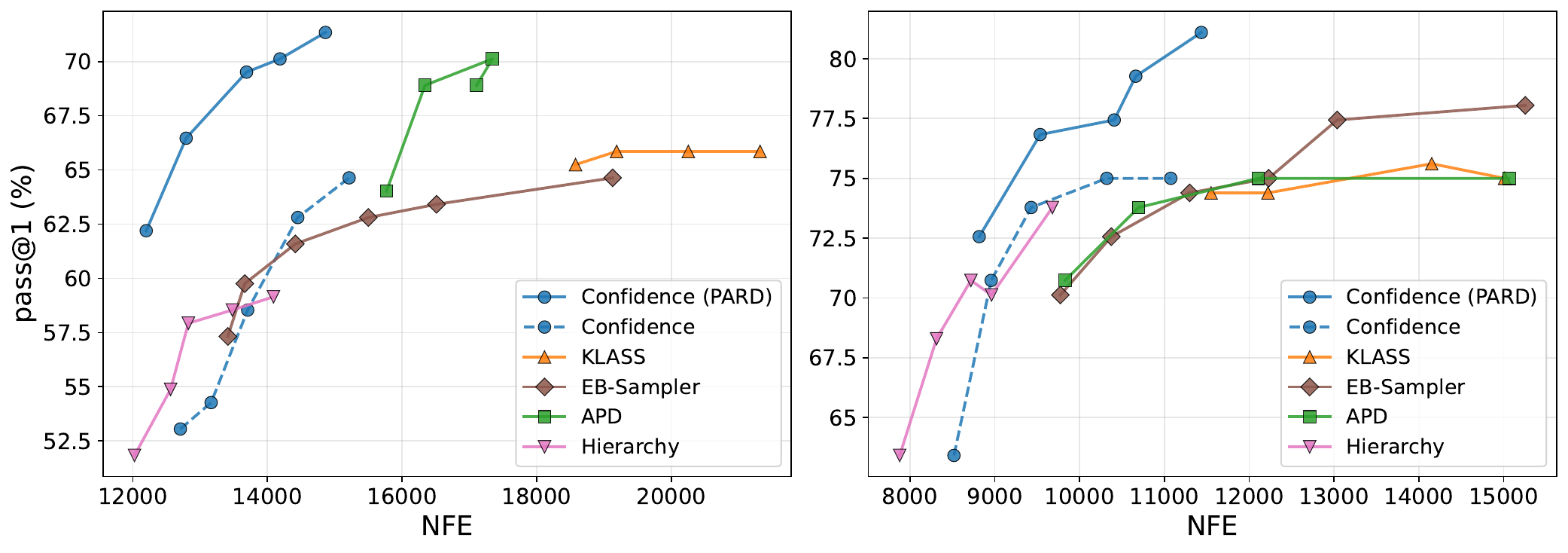}
    \caption{
        {\it HumanEval} Base pass@1 vs.\ Number of Function Evaluations (NFE)
        for confidence-based PARD, EB-Sampler, KLASS, and APD on Fast-dLLM v2 7B (left) and SDAR 8B (right).
    }
    \label{fig:humaneval-nfe-1x2}
\end{figure*}

Figures~\ref{fig:mbpp-tps-2x3} and~\ref{fig:mbpp-nfe-2x3} show the
corresponding \textit{MBPP} pass@1--throughput and pass@1--NFE trade-offs for
{\small\sf Confidence}, {\small\sf Margin}, and {\small\sf Entropy}, 
along with their PARD variants. 
Figures~\ref{fig:mbpp-tps-1x2}
and~\ref{fig:mbpp-nfe-1x2} show the same MBPP trade-offs for confidence-based PARD, EB-Sampler, KLASS, APD, and Hierarchy.
PARD achieves stronger trade-offs on \textit{MBPP} than the other methods, except for SDAR 8B, where EB-Sampler performs best.

\begin{figure*}[t]
    \centering
    \includegraphics[width=\linewidth]{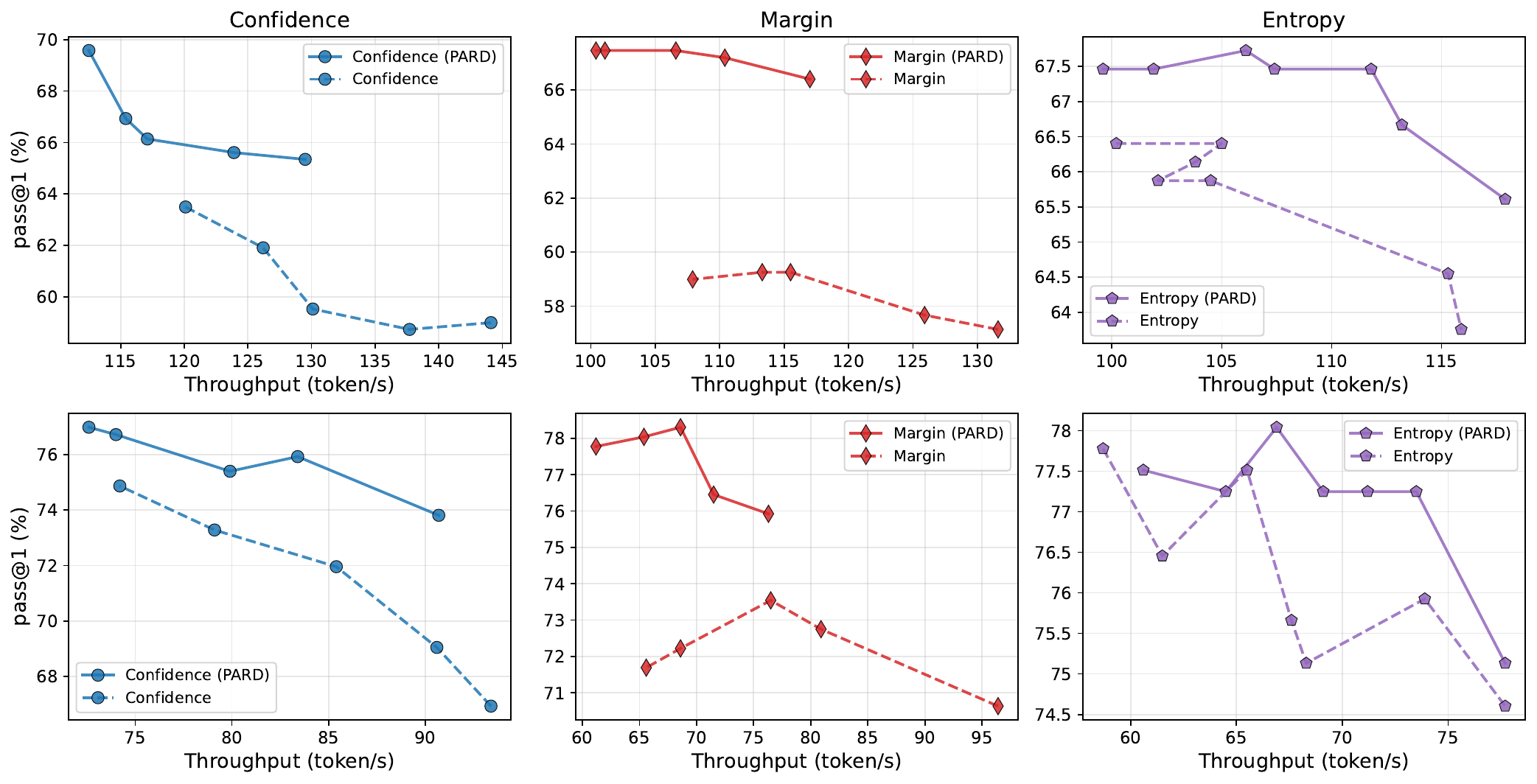}
    \caption{
        \textit{MBPP} pass@1 vs.\ decoding throughput
        for {\small\sf Confidence} (left), {\small\sf Margin} (middle), and {\small\sf Entropy} (right) parallel sampling and their
PARD variants on Fast-dLLM v2 7B (top row) and SDAR 8B (bottom row).
    }
    \label{fig:mbpp-tps-2x3}
\end{figure*}

\begin{figure*}[t]
    \centering
    \includegraphics[width=\linewidth]{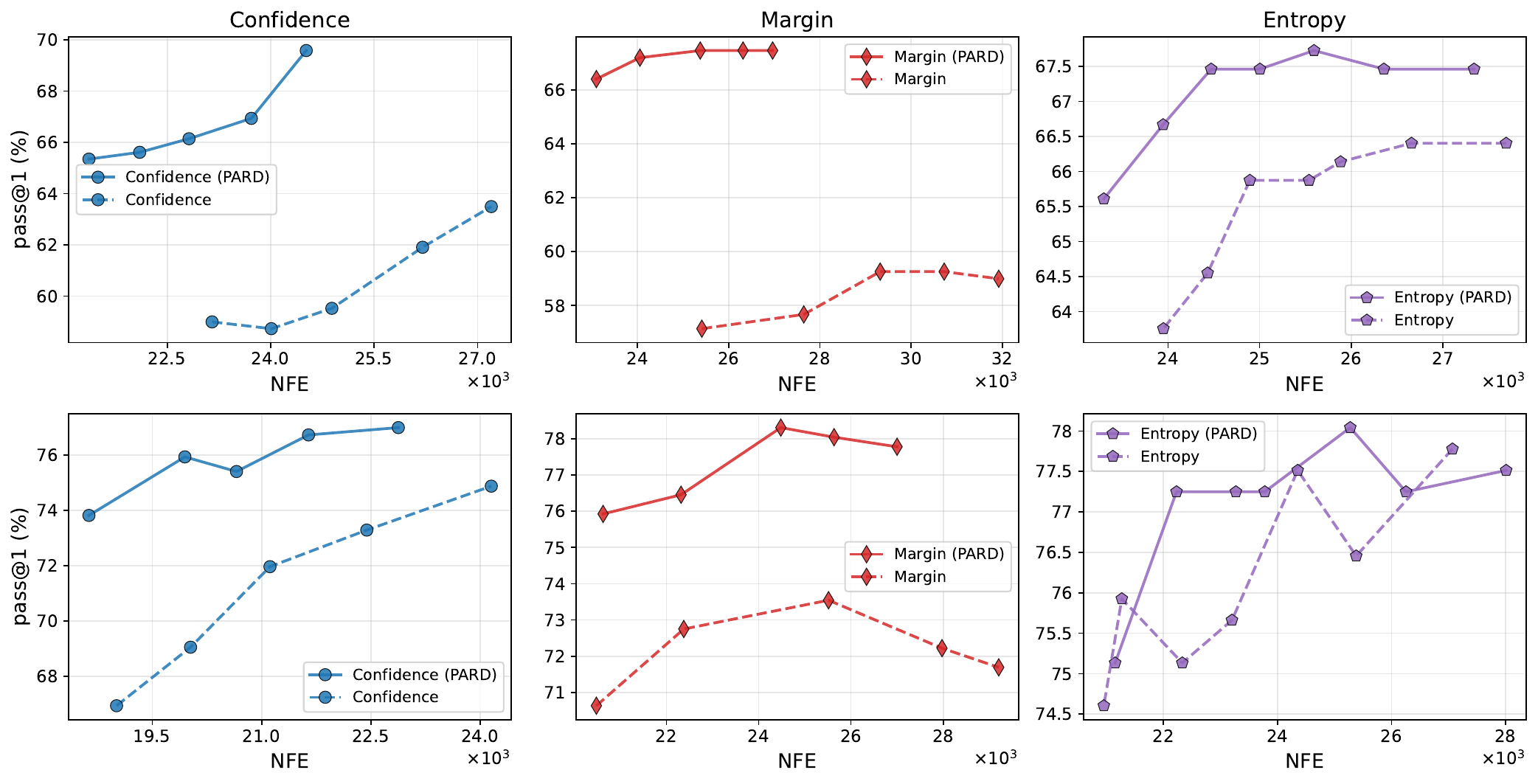}
    \caption{
        \textit{MBPP} pass@1 vs.\ NFE
        for {\small\sf Confidence} (left), {\small\sf Margin} (middle), and {\small\sf Entropy} (right) parallel sampling and their
PARD variants on Fast-dLLM v2 7B (top row) and SDAR 8B (bottom row).
    }
    \label{fig:mbpp-nfe-2x3}
\end{figure*}

\begin{figure*}[t]
    \centering
    \includegraphics[width=\linewidth]{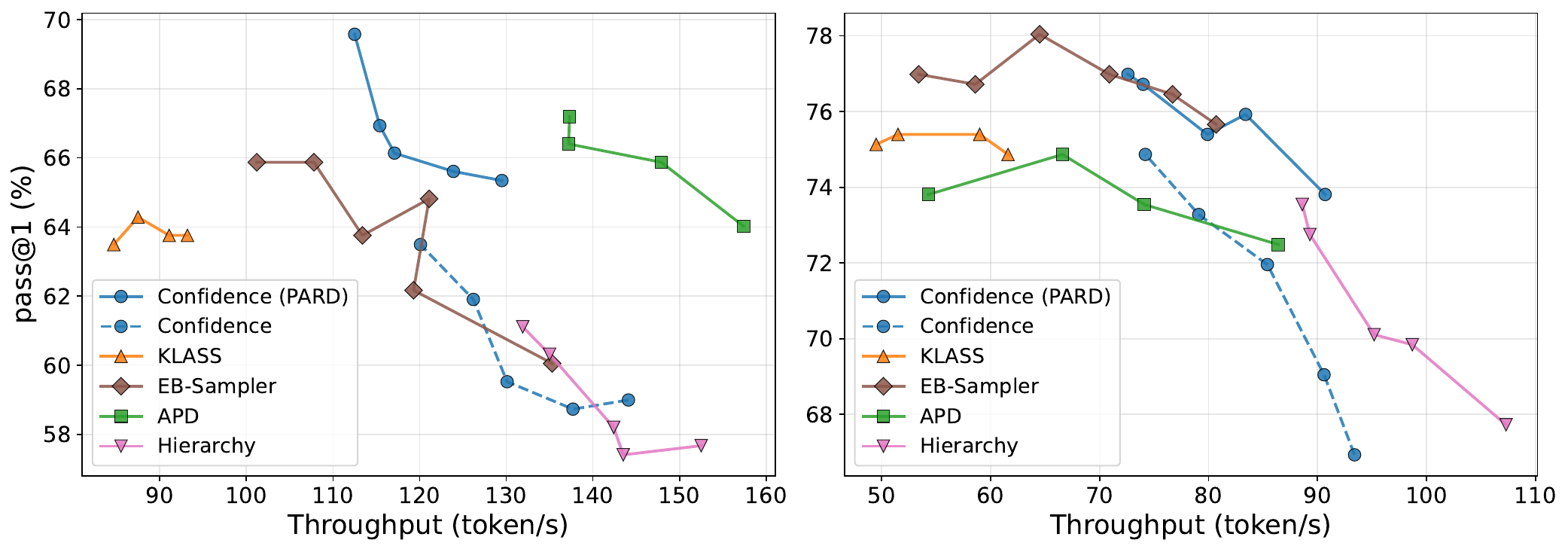}
    \caption{
        {\it MBPP} Base pass@1 vs.\ decoding throughput
        for confidence-based PARD, EB-Sampler, and KLASS on Fast-dLLM v2 7B (left) and SDAR 8B (right).
    }
    \label{fig:mbpp-tps-1x2}
\end{figure*}

\begin{figure*}[t]
    \centering
    \includegraphics[width=0.9\linewidth]{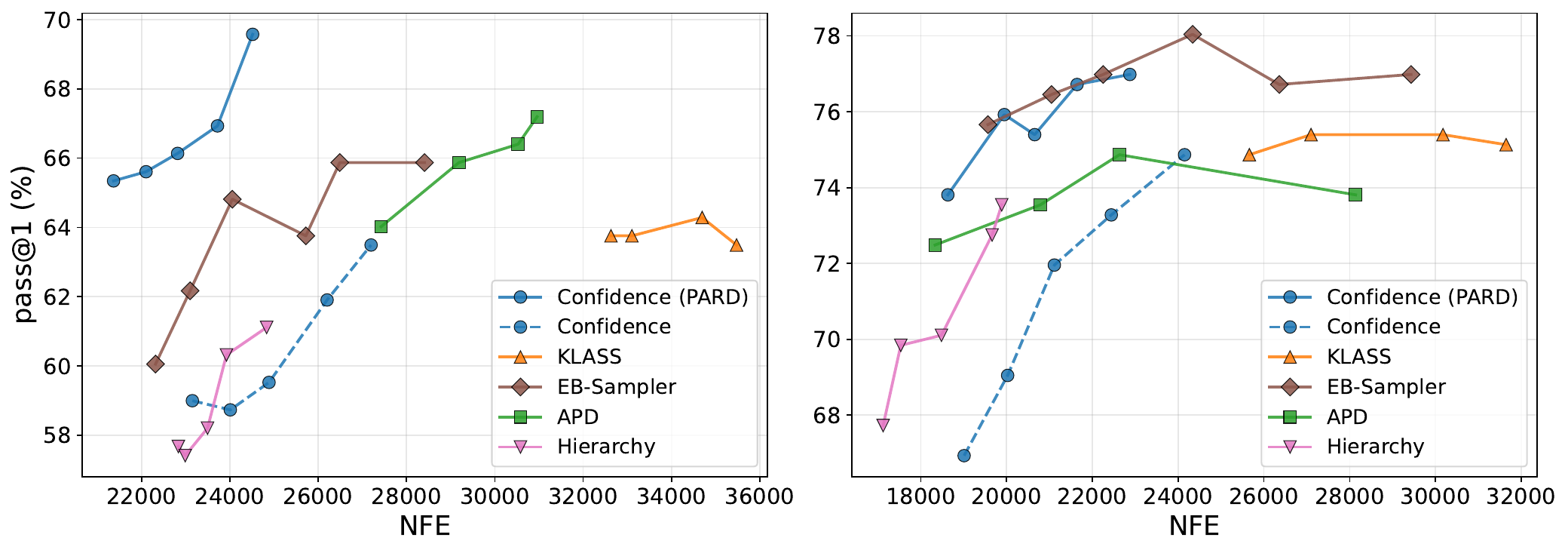}
    \caption{
        {\it MBPP} Base pass@1 vs.\ NFE
        for confidence-based PARD, EB-Sampler, and KLASS on Fast-dLLM v2 7B (left) and SDAR 8B (right).
    }
    \label{fig:mbpp-nfe-1x2}
\end{figure*}

\clearpage

\twocolumn[{%
\begin{@twocolumnfalse}
\centering
\includegraphics[width=0.95\textwidth]{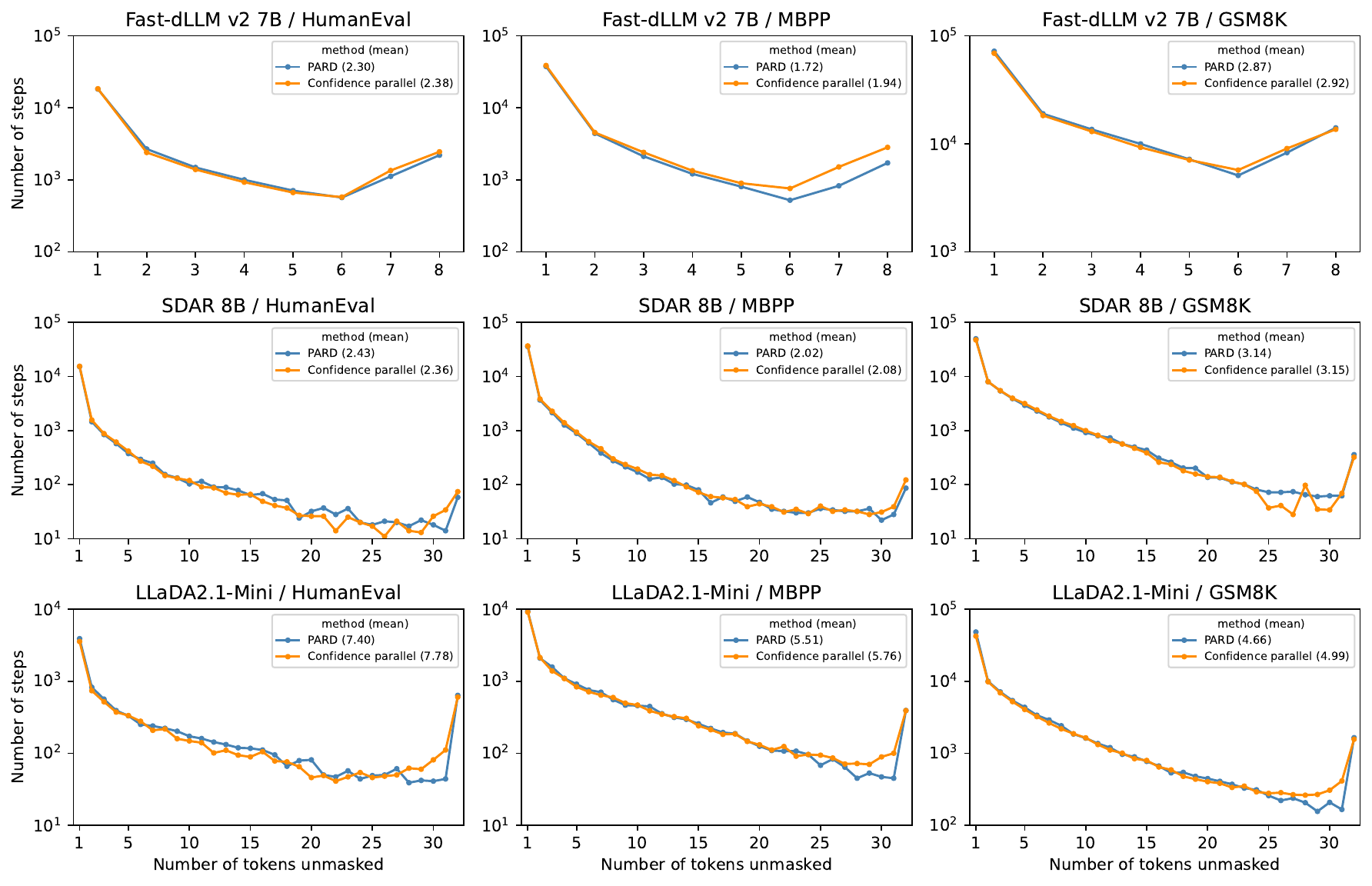}
\captionof{figure}{Number of tokens unmasked per step by confidence-based PARD and Confidence Parallel on \textit{HumanEval}, \textit{MBPP}, and \textit{GSM8K}.}
\label{fig:step_unmask_merged}
\vspace{1.0em}

\end{@twocolumnfalse}
}]

\subsection{Number of Tokens Unmasked per Step}
\label{sec:appendix_num_unmasked}
Figure~\ref{fig:step_unmask_merged} shows the distributions of the number of tokens unmasked per step by confidence-based PARD and Confidence Parallel on \textit{HumanEval}, \textit{MBPP}, and \textit{GSM8K}.
Under the default threshold configurations as in Section \ref{sec:benchmark_performance}, 
Fast-dLLM v2 and SDAR use $\tau_c=0.9$, while LLaDA2.1 uses a more permissive threshold of $\tau_c=0.7$.
We observe similar distributions for PARD and Confidence Parallel, suggesting that PARD preserves most of the parallelism while enforcing a left-to-right prefix constraint.
For Fast-dLLM v2, the number of tokens unmasked in a single step is upper-bounded by 8 because we follow its default sub-block inference strategy with sub-block size 8.
Accordingly, Fast-dLLM v2 and SDAR generally unmask about 2-3 tokens per step on average, 
whereas LLaDA2.1 shows substantially higher parallelism, with average unmasking counts around 5-8 tokens per step.

\subsection{Qualitative Case Study}
\label{sec:appendix_case_study}

We study how arbitrary-order decoding can fail with a concrete example from \textit{HumanEval}.
Figure~\ref{fig:example_histogram} shows the generations produced by Confidence Parallel and confidence-based PARD on \texttt{HumanEval/111}, using SDAR 8B and the same default confidence threshold $\tau_c$.
The two outputs are nearly identical and differ only in the empty-input guard \texttt{if not test:\ return \{\}}.
PARD generates this guard, whereas Confidence Parallel omits this part.
Without the guard, \texttt{max(counts.values())} is evaluated on an empty dictionary when the input is the empty string and raises a \texttt{ValueError}, 
so Confidence Parallel fails the \texttt{histogram("")} test while PARD passes.

The omission stems from the decoding order, shown in Figure~\ref{fig:example_steps}.
Both samplers produce the function signature identically through step 0 and step 1.
At step 2, the guard token \texttt{"if"}, at the slot immediately after the signature, has confidence $0.20$, 
while \texttt{"="}, one slot later (starting \texttt{letters = test.split()}), has confidence $0.28$.
Confidence Parallel selects the most confident tokens first, so it unmasks \texttt{"="}.
This fixes the surrounding body and forces the earlier slot to become \texttt{letters}, leaving no room for the guard.
PARD instead commits positions strictly left to right, so it resolves the earlier slot first and unmasks \texttt{"if"} despite its lower confidence, keeping the guard.

\begin{figure*}[t]
\begin{exbox}{Sampling Example --- HumanEval/111 (\texttt{histogram})}

\textbf{Prompt:}
\begin{Verbatim}[fontsize=\footnotesize, breaklines, breakanywhere]
def histogram(test):
    """Given a string representing a space separated lowercase letters, return a
    dictionary of the letter with the most repetition and containing the corresponding
    count. If several letters have the same occurrence, return all of them.

    Example:
    histogram('a b c') == {'a': 1, 'b': 1, 'c': 1}
    histogram('a b b a') == {'a': 2, 'b': 2}
    histogram('b b b b a') == {'b': 4}
    histogram('') == {}
    """
\end{Verbatim}

\smallskip
\textbf{Confidence Parallel} \textcolor{failc}{\textbf{(Fail)}}\textbf{:}
{\footnotesize\begin{alltt}
def histogram(test):
    letters = test.split()
    counts = \{\}
    for letter in letters:
        if letter in counts:
            counts[letter] += 1
        else:
            counts[letter] = 1
    max_count = max(counts.values())
    return \{letter: count for letter, count in counts.items() if count == max_count\}
\end{alltt}}

\smallskip
\textbf{PARD} \textcolor{passc}{\textbf{(Pass)}}\textbf{:}
{\footnotesize\begin{alltt}
def histogram(test):
    \gline{if not test:}
        \gline{return \{\}}
    letters = test.split()
    counts = \{\}
    for letter in letters:
        if letter in counts:
            counts[letter] += 1
        else:
            counts[letter] = 1
    max_count = max(counts.values())
    return \{letter: count for letter, count in counts.items() if count == max_count\}
\end{alltt}}

\end{exbox}
\caption{Generations of Confidence Parallel and PARD on \texttt{HumanEval/111}. 
The empty-input guard is highlighted in green.}
\label{fig:example_histogram}
\end{figure*}

\begin{figure*}[h]
\centering
\begin{tabular}{@{}c l l@{}}
\textbf{Step} & \textbf{Confidence Parallel} & \textbf{PARD}\\
\midrule
0 & \texttt{def} & \texttt{def}\\
1 & \texttt{histogram(test):} & \texttt{histogram(test):}\\
2 & \textcolor{failc}{\texttt{=}}\,{\footnotesize(0.28)} & \textcolor{guardtext}{\texttt{if}}\,{\footnotesize(0.20)}\\
$\vdots$ & \texttt{letters = test.split()}~\textcolor{failc}{\ding{55}}
  & \texttt{if not test: return \{\}}~\textcolor{passc}{\ding{51}}\\
\end{tabular}
\caption{Tokens unmasked at the first few decoding step by Confidence Parallel and PARD for the example in Figure~\ref{fig:example_histogram}. Values in parentheses are the predictive confidence in the token.}
\label{fig:example_steps}
\end{figure*}

\subsection{Top-$p$ Sampling Experiments}
\label{sec:appendix_top-p}
All experiments in Section \ref{sec:experiments} use greedy decoding.
To assess whether PARD's advantage remains robust under stochastic nucleus decoding,
we additionally evaluate top-$p$ sampling on \textit{HumanEval}, \textit{MBPP}, and \textit{GSM8K} using temperature = $1.0$ and $p$ = 0.8, 
a configuration adopted in recent LLM reasoning studies \citep{liu2025your,jinentropy}.
We compare PARD against the three most competitive parallel baselines used in our main experiments:
Confidence Parallel, KLASS, and EB-Sampler.
We repeat the experiment three times and report the mean and standard deviation of pass@1.

\begin{table*}[t]
\centering
\resizebox{\textwidth}{!}{%
\begin{tabular}{llccccc}
\toprule
Model & Method & \multicolumn{2}{c}{HumanEval} & \multicolumn{2}{c}{MBPP}
& GSM8K \\
& & Base & Plus & Base & Plus & \\
\midrule
Fast-dLLM v2 7B
& KLASS
& 64.0{\ensuremath{\pm}}2.6 & 58.9{\ensuremath{\pm}}2.0
& 56.9{\ensuremath{\pm}}0.6 & 48.1{\ensuremath{\pm}}1.3
& 82.0{\ensuremath{\pm}}0.6 \\
& EB-Sampler
& 61.4{\ensuremath{\pm}}2.5 & 55.5{\ensuremath{\pm}}2.6
& 58.8{\ensuremath{\pm}}0.4 & 49.8{\ensuremath{\pm}}1.0
& 81.3{\ensuremath{\pm}}0.3 \\
& Confidence Parallel
& 58.1{\ensuremath{\pm}}3.7 & 53.9{\ensuremath{\pm}}3.7
& 57.5{\ensuremath{\pm}}1.5 & 49.1{\ensuremath{\pm}}2.2
& 81.9{\ensuremath{\pm}}0.4 \\
& PARD
& \textbf{67.1{\ensuremath{\pm}}1.8} & \textbf{62.6{\ensuremath{\pm}}1.2}
& \textbf{63.4{\ensuremath{\pm}}1.6} & \textbf{53.5{\ensuremath{\pm}}1.5}
& \textbf{82.2{\ensuremath{\pm}}0.9} \\
\midrule
SDAR 8B
& KLASS
& 67.3{\ensuremath{\pm}}2.7 & 63.4{\ensuremath{\pm}}2.2
& 69.8{\ensuremath{\pm}}1.3 & 59.1{\ensuremath{\pm}}1.2
& 90.3{\ensuremath{\pm}}0.6 \\
& EB-Sampler
& 71.5{\ensuremath{\pm}}2.4 & 64.8{\ensuremath{\pm}}1.0
& 72.7{\ensuremath{\pm}}0.8 & 61.1{\ensuremath{\pm}}1.6
& 90.0{\ensuremath{\pm}}0.4 \\
& Confidence Parallel
& 70.9{\ensuremath{\pm}}1.7 & 66.9{\ensuremath{\pm}}2.5
& 72.6{\ensuremath{\pm}}1.9 & 61.3{\ensuremath{\pm}}1.3
& 89.8{\ensuremath{\pm}}0.3 \\
& PARD
& \textbf{73.8{\ensuremath{\pm}}2.8} & \textbf{67.9{\ensuremath{\pm}}2.4}
& \textbf{72.9{\ensuremath{\pm}}0.7} & \textbf{61.8{\ensuremath{\pm}}1.1}
& \textbf{90.8{\ensuremath{\pm}}0.5} \\
\midrule
LLaDA2.1-Mini
& KLASS
& 73.6{\ensuremath{\pm}}2.9 & 70.3{\ensuremath{\pm}}2.7
& 78.4{\ensuremath{\pm}}0.5 & 65.5{\ensuremath{\pm}}1.5
& 90.2{\ensuremath{\pm}}0.3 \\
& EB-Sampler
& 76.6{\ensuremath{\pm}}1.3 & 74.0{\ensuremath{\pm}}1.0
& 79.6{\ensuremath{\pm}}0.2 & 67.7{\ensuremath{\pm}}0.6
& 90.8{\ensuremath{\pm}}0.4 \\
& Confidence Parallel
& 79.9{\ensuremath{\pm}}2.5 & 76.8{\ensuremath{\pm}}2.3
& 78.2{\ensuremath{\pm}}0.7 & 66.5{\ensuremath{\pm}}0.2
& 90.6{\ensuremath{\pm}}0.4 \\
& PARD
& \textbf{85.8{\ensuremath{\pm}}1.0} & \textbf{81.9{\ensuremath{\pm}}0.3}
& \textbf{82.6{\ensuremath{\pm}}0.3} & \textbf{70.6{\ensuremath{\pm}}0.2}
& \textbf{91.1{\ensuremath{\pm}}0.3} \\
\bottomrule
\end{tabular}
}
\caption{
Performance of Fast-dLLM v2 7B, SDAR 8B, and LLaDA2.1-Mini under top-$p$ sampling with temperature $1.0$ and $p=0.8$ on 
\textit{HumanEval}, \textit{MBPP}, and \textit{GSM8K}. 
Results are reported as mean $\pm$ standard deviation over three runs.
The best result for each model and benchmark is bolded.
}
\label{tab:top-p}
\end{table*}

Table \ref{tab:top-p} shows the results on Fast-dLLM v2, SDAR, and LLaDA2.1-Mini.
We have two major observations.
First, 
under top-$p$ sampling, almost all samplers perform worse than their greedy counterparts in Table \ref{tab:main_results}.
This is consistent with the default greedy decoding adopted by recent BDLMs such as Fast-dLLM v2 and LLaDA2.1, 
while also avoiding an additional hyperparameter search.
Second, 
PARD achieves the highest mean performance among all parallel samplers, showing that its advantage remains robust under stochastic top-$p$ sampling.
We conduct paired t-tests over all models and benchmarks for PARD vs each baseline. 
PARD significantly outperforms Confidence Parallel, EB-Sampler, and KLASS, with $p$-values of $3.5\times10^{-4}$, $3.4\times10^{-4}$, and $1.7\times10^{-4}$, respectively.
Overall, PARD retains its relative advantage under top-$p$ sampling.

\clearpage

\end{document}